\documentclass[10pt]{article} % For LaTeX2e

\usepackage[preprint]{tmlr}
\usepackage{graphicx}           % For including images
\usepackage{subcaption}         % Allows for the use of subfigures and subcaptions
\usepackage[space]{grffile}     % For spaces in image names
\usepackage{url}                % For displaying URLs
\usepackage{lipsum}             % For placeholder text
\usepackage{wrapfig}            % For wrapping figures
\usepackage{titletoc}

\let\tmlrAND\AND
\let\AND\relax
\newcommand{\showfont}{encoding: \f@encoding{},
  family: \f@family{},
  series: \f@series{},
  shape: \f@shape{},
  size: \f@size{}
}
\usepackage{amssymb}
\usepackage{amsfonts}
\usepackage{amsthm}
\usepackage{mathtools}
\usepackage{mathrsfs}
\usepackage{bm}
\usepackage{bbm}
\usepackage{algorithm}
\usepackage{algorithmic}
\usepackage{tabularx}
\usepackage{hyperref}
\usepackage{cleveref}
\crefname{figure}{Fig.}{Figs.}
\Crefname{figure}{Fig.}{Figs.}
\crefname{appendix}{App.}{Apps.}
\Crefname{appendix}{App.}{Apps.}
\theoremstyle{definition}

\newtheorem*{definition*}{Definition}
\crefname{definition}{Definition}{Definition}

\theoremstyle{definition}
\newtheorem{proposition}{Proposition}
\newtheorem*{proposition*}{Proposition}
\crefname{proposition}{Proposition}{Proposition}

\theoremstyle{definition}
\newtheorem{lemma}{Lemma}
\newtheorem*{lemma*}{Lemma}
\crefname{lemma}{Lemma}{Lemma}

\theoremstyle{definition}

\newtheorem*{corollary*}{Corollary}
\crefname{corollary}{Corollary}{Corollary}

\newtheorem*{theorem*}{Theorem}
\crefname{theorem}{Theorem}{Theorem}

\theoremstyle{definition}
\newtheorem{assumption}{Assumption}
\newtheorem*{assumption*}{Assumption}

\theoremstyle{remark}

\newtheorem*{remark*}{Remark}

\newtheorem*{example*}{\textbf{例}}
\numberwithin{example}{subsection}

\newcommand{\E}[2][]{\mathbb{E}_{#1} \left[ {#2} \right]} % 期待値
\renewcommand{\epsilon}{\varepsilon}

\newcommand{\cN}{\mathcal{N}}
\newcommand{\iid}{\textnormal{i.i.d.}}

\usepackage[T1]{fontenc}
\usepackage[utf8]{inputenc} % lualatex/xeLaTeXなら不要
\usepackage{listings}
\usepackage{xcolor}

\usepackage[T1]{fontenc}
\usepackage[utf8]{inputenc} % lualatex/xeLaTeXなら不要
\usepackage{listings}
\usepackage{xcolor}

\lstdefinestyle{py}{
  language=Python,
  basicstyle=\ttfamily\small,
  numbers=left, numberstyle=\tiny, numbersep=8pt,
  frame=single, breaklines=true, showstringspaces=false, tabsize=4
}
\usepackage{bm}

\makeatletter
\newcommand{\RightComment}[1]{\hfill\(\triangleright\)~#1}
\makeatother

\hypersetup{hidelinks}

\usepackage{xcolor}

\definecolor{softblue}{RGB}{70, 130, 220}

\hypersetup{
    colorlinks=true,
    citecolor=softblue,
    linkcolor=softblue,
    urlcolor=softblue
}

\def\month{MM}  % Insert correct month for camera-ready version
\def\year{YYYY} % Insert correct year for camera-ready version
\def\openreview{\url{https://openreview.net/forum?id=XXXX}} % Insert correct link to OpenReview for camera-ready version

\title{Retry Policy Gradients in Continuous Action Spaces}

\author{\name Soichiro Nishimori \email nishimori@ms.k.u-tokyo.ac.jp \\
\addr The University of Tokyo, Japan
\tmlrAND
\name Paavo Parmas \email paavo.parmas@weblab.t.u-tokyo.ac.jp \\
\addr The University of Tokyo, Japan
}

\begin{document}

\maketitle

\begin{abstract}
     Retry-based objectives such as pass@K and max@K
optimize the best return obtained from multiple sampled trajectories,
and recent work has shown that they can promote exploration without
explicit exploration bonuses.  In discrete action spaces, ReMax
was shown to do so by adapting to return uncertainty.  In this work, we
introduce pathwise derivative estimators for retry objectives and use
them to extend ReMax to continuous action spaces.  We study the
resulting learning dynamics and show that, even with deterministic
rewards, ReMax can encourage stochastic exploration by reshaping the
policy-gradient landscape.  In particular, it alters gradients both in
direction, biasing updates toward higher policy entropy, and in
magnitude, damping gradients and slowing convergence.  We
further show that Adam's adaptive normalization can mitigate this
damping, depending on its numerical stabilization parameter.
Empirically, we instantiate this objective as \textbf{ReMax
Actor-Critic (ReMAC)}, an off-policy actor--critic algorithm that
optimizes the ReMax objective using a pathwise derivative estimator.
Our experiments show that ReMAC can promote higher policy entropy
without entropy regularization and achieves performance comparable
to SAC.
The official code is available at \url{https://github.com/nissymori/ReMAC}.
\end{abstract}

\section{Introduction}
Exploration remains a central challenge in reinforcement learning (RL) \citep{sutton1998intro}.
In continuous control, exploration is typically induced by stochastic policies combined with entropy regularization, as popularized by Soft Actor-Critic (SAC) \citep{haarnoja2018sac,christodoulou2019soft}.
Alternatively, noise injection---through either action-space or parameter-space perturbations---is widely used to encourage exploratory behavior \citep{fujimoto2018addressing,lillicrap2020continuous,plappert2017parameter}.
Recent advances have further improved actor--critic methods through architectural refinements, such as the use of ensemble critics \citep{chen2021randomized,hiraoka2021dropout}.

Recently, retry-based objectives, which maximize the highest return achieved over multiple ($M \in \mathbb{N}$) sampled trajectories, have emerged as a compelling alternative formulation for RL in discrete action spaces \citep{koyamada2023emergence,nishimori2026emergence,tang2025optimizing,walder2025pass}.
In large language model (LLM) post-training, directly optimizing pass@K, which evaluates the success of the best response among $K$ generations, significantly improves output diversity \citep{tang2025optimizing,chen2025pass,walder2025pass}.
Within episodic RL settings, the ReMax objective was introduced to explicitly account for the uncertainty of returns \citep{koyamada2023emergence,nishimori2026emergence}.
Prior studies have revealed that ReMax naturally encourages exploration as an adaptive response to return uncertainty \citep{koyamada2023emergence,nishimori2026emergence}.
Furthermore, \citet{nishimori2026emergence} demonstrated that ReMax can maintain high policy entropy by slowing down convergence, even in environments with deterministic rewards.
However, these existing analyses are largely restricted to discrete action spaces, leaving the detailed theoretical properties of the ReMax objective incompletely understood.

In this study, we investigate the fundamental properties of the ReMax objective in continuous action spaces.
Specifically, we identify intrinsic characteristics promoting stochastic exploration in both the \textit{direction} and \textit{magnitude} of the deterministic ReMax policy gradient when the retry budget satisfies $M>1$.
We demonstrate that when a policy is far from the optimum and exhibits low entropy, the gradient direction naturally acts to increase policy entropy.
Moreover, as the retry budget $M$ grows, the gradient norm diminishes near the optimum, which inherently slows convergence and sustains exploratory behavior.

Fig.~\ref{fig:vector_field} illustrates this effect.
It shows the vector field of ReMax gradients for a 1D Gaussian policy $a \sim \cN(\mu,\sigma^2)$ with reward $r(a)=-a^2$.
For larger $M$, when $\mu$ is far from the optimum and $\sigma$ is small (bottom edges), the gradient points toward increasing $\sigma$, whereas $M=1$ (standard RL) always decreases $\sigma$.
Conversely, when $\mu$ is near the optimum and $\sigma$ is large (top center), the gradient norm is damped (lighter color), slowing convergence.
Unlike entropy regularization, ReMax does not change the optimal parameters $\mu=0$ and $\sigma=0$, whereas entropy regularization shifts the optimum to $\mu=0$ and $\sigma>0$.
These properties yield an optimization trajectory in which policy entropy first increases and then gradually decreases, encouraging stochastic exploration before converging to the deterministic optimal policy.
A detailed explanation of Fig.~\ref{fig:vector_field} is provided in Sec.~\ref{sec:grad_analysis:vector_field}, and a theoretical explanation for why these properties emerge is given in Sec.~\ref{sec:grad_analysis:theoretical_explanation}.
Finally, the damping effect can be mitigated by Adam’s adaptive gradient normalization \citep{kingma2014adam}, the de facto standard optimizer in deep learning.
\begin{figure}[t]
    \centering
    \includegraphics[width=1.0\textwidth]{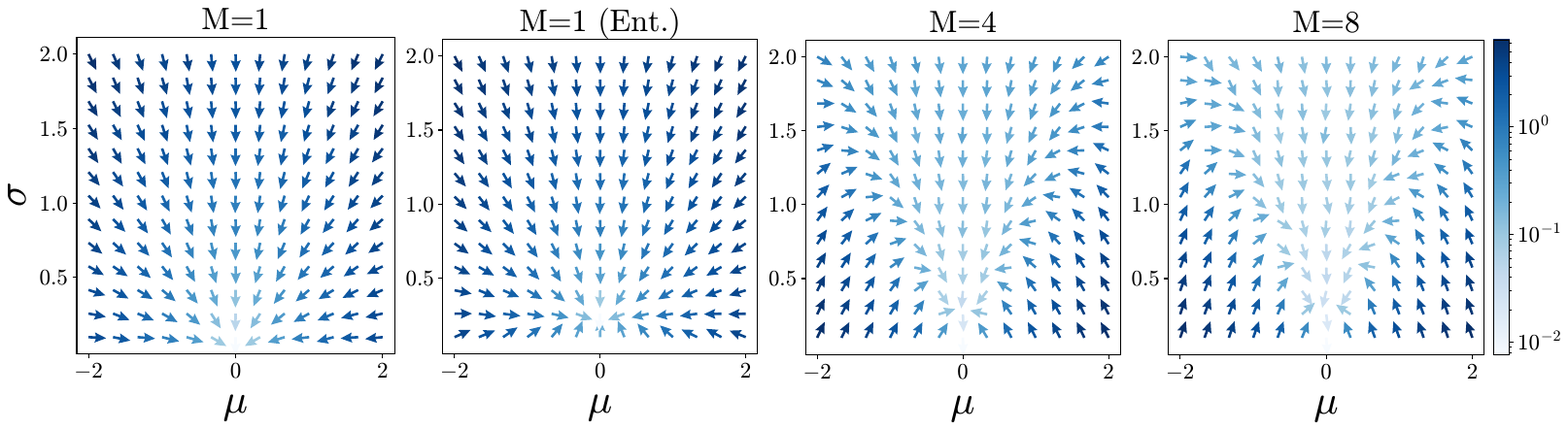}
    \caption{
        Vector field of normalized ReMax gradients for $M = 1, 4, 8$ over $\mu \in [-2,2]$ and $\sigma \in [0.1,2]$.
        We also plot the entropy-regularized standard RL ($M=1$ (Ent.)).
        Arrow color indicates gradient norm (darker is larger).
        For larger $M$, the gradient increases $\sigma$ when $\mu$ is far from the optimum (bottom edge) and the gradient norm near the optimum shrinks, slowing convergence.
        Unlike entropy regularization, ReMax always converges to the deterministic optimum $\mu=0,\sigma=0$.
    }
    \label{fig:vector_field}
\end{figure}

We then instantiate the ReMax objective in an off-policy actor--critic algorithm, which we call \textbf{ReM}ax \textbf{A}ctor-\textbf{C}ritic (\textbf{ReMAC}), with a pathwise derivative estimator.
Empirically, we show that ReMAC achieves performance comparable to SAC while exhibiting increased entropy for larger $M$.

\paragraph{Contributions.}
Our main contribution is to identify intrinsic properties of the ReMax gradient that encourage stochastic exploration in both direction and magnitude when $M>1$.
We illustrate this effect via a toy vector-field analysis and provide a theoretical explanation in the isotropic Gaussian policy setting.
We then instantiate the ReMax objective in an off-policy actor--critic algorithm, ReMAC, which can be implemented with minimal modifications to SAC.
Finally, we provide empirical evidence that ReMAC can increase policy entropy with $M>1$ without entropy regularization and achieves performance comparable to SAC in continuous-control tasks.

\section{Related Work} \label{sec:related_work}
Extensive research has addressed the exploration-exploitation tradeoff in reinforcement learning (RL).
Existing exploration strategies can be broadly categorized into two paradigms: \textit{stochastic exploration} and \textit{deep exploration}.
Stochastic exploration maintains policy diversity to prevent premature convergence to suboptimal behaviors.
In contrast, deep exploration seeks temporally coherent behaviors capable of uncovering informative, unvisited regions of the environment.
A prevalent method for encouraging stochastic exploration is to introduce an entropy bonus into the policy optimization objective \citep{schulman2017proximal,haarnoja2018sac,christodoulou2019soft,lee2024simba,nauman2024bigger}.
This entropy regularization is a cornerstone of highly successful RL algorithms, such as Soft Actor-Critic (SAC), and remains a standard design choice in modern deep RL architectures \citep{nauman2024bigger,lee2024simba}.

Conversely, deep exploration is typically driven by intrinsic rewards, uncertainty estimation, or structured noise injection.
Standard intrinsic reward techniques include count-based exploration \citep{bellemare2016unifying,ostrovski2017count,lobel2023flipping,tang2017exploration}, prediction-error bonuses \citep{burda2018exploration,pathak2017curiosity}, and information gain or curiosity signals \citep{houthooft2016vime,sukhija2024maxinforl,schmidhuber2010formal}.
Alternatively, exploration can be promoted by modeling uncertainty in value estimates.
For instance, ensemble-based methods approximate posterior sampling using multiple Q-functions \citep{osband2016deep,osband2018randomized,osband2019deep,osband2023approximate,zhang2019ace,chen2021randomized,hiraoka2021dropout}, while related approaches leverage Langevin Monte Carlo updates \citep{ishfaq2024more}.
Finally, structured noise injection, whether in the action space or parameter space, offers another mechanism for temporally extended exploration \citep{fujimoto2018addressing,lillicrap2020continuous,plappert2017parameter,fortunato2017noisy}.

Retry-based objectives offer a distinct paradigm for exploration.
Rather than augmenting the reward function or injecting explicit noise into the policy, these objectives evaluate the maximum return achieved across multiple sampled trajectories.
Prior work demonstrates that retry objectives successfully induce exploratory behavior in discrete action spaces \citep{koyamada2023emergence,nishimori2026emergence,walder2025pass,chen2025pass,tang2025optimizing,hamid2025polychromic}.
In the context of large language model (LLM) post-training, several recent studies optimize pass@K, or its generalization, max@K, to enhance the diversity of generated outputs \citep{tang2025optimizing,chen2025pass,walder2025pass}.

Within the retry-based framework, \citet{koyamada2023emergence} introduced a policy-gradient estimator for ReMax that relies on a resettable simulator.
Later, \citet{nishimori2026emergence} developed a Q-function-based formulation enabling analytical gradient computation in discrete action spaces, demonstrating that ReMax can foster both stochastic and deep exploration.
Specifically, deep exploration emerges when ReMax is paired with posterior sampling; notably, \citet{tong2026finite} provided a finite-time regret analysis for this combination under a two-retry setting ($M=2$).
Conversely, stochastic exploration in ReMax has primarily been illustrated through a toy deterministic-reward bandit experiment, where it was observed that ReMax reshapes the gradient landscape and slows convergence, thereby encouraging exploration.
However, this prior analysis remains restricted to discrete action spaces and lacks a rigorous, detailed examination of the gradient landscape.

In this paper, we investigate the emergence of \textit{stochastic} exploration from ReMax in continuous action spaces with deterministic rewards, leaving the study of deep exploration for future work.
Unlike the pass@K literature, which assumes explicit retries for a given prompt and directly observes outcomes, our setting deals with episodic returns where retries must be emulated via function approximation.
Furthermore, while pass@K studies typically rely on sample-based REINFORCE-style estimators, we adopt a pathwise gradient estimator \citep{kingma2013auto,parmas2021unified,jankowiak2018pathwise,rezende2014stochastic}.
This continuous-space pathwise estimator is not directly applicable to standard LLM post-training, which relies on non-differentiable, rule-based reward verifiers.
Crucially, unlike traditional entropy regularization, ReMax alters the evaluation of rewards without introducing an auxiliary entropy bonus.
Consequently, it naturally avoids bonus-induced bias at the target optimum and eliminates the need to decay bonus coefficients.
\section{Preliminaries}

\paragraph{Reinforcement learning (RL).}
We consider the reinforcement learning (RL) problem \citep{sutton1998intro}, formulated as a Markov decision process (MDP), $\mathcal{M} = (\mathcal{S}, \mathcal{A}, \mathcal{P}, p_0, \mathcal{R}, \gamma)$, where $\mathcal{S}$ is the state space, $\mathcal{A} \subseteq \mathbb{R}^d$ is the continuous action space with dimension $d$, $\mathcal{P}$ is the transition dynamics, $p_0$ is the initial state distribution, $\mathcal{R}: \mathcal{S} \times \mathcal{A} \to \mathbb{R}$ is the reward function, and $\gamma \in [0, 1)$ is the discount factor.
The goal is to train a policy $\pi(\cdot\mid s)$ that maximizes the expected discounted reward $\E[\pi]{R_0}$, where $R_0 = \sum_{h=0}^{\infty} \gamma^{h} r(s_h, a_h)$ is the discounted return and $\E[\pi]{\cdot}$ denotes the expectation over the trajectory $(s_0, a_0, r_0, s_1, a_1, r_1, \ldots)$ under policy $\pi$.
We define the value function of the policy $\pi$ as $V^\pi(s) = \E[\pi]{R_0 \mid s_0 = s}$ and the Q-function as $Q^\pi(s, a) = \E[\pi]{R_0 \mid s_0 = s, a_0 = a}$.
The optimal value and Q-functions are defined as 
$V^*(s) := \sup_\pi V^\pi(s)$ and $Q^*(s, a) := \sup_\pi Q^\pi(s, a)$ respectively.

\section{ReMax Objective and its Gradient Estimation} \label{sec:remax}
This section introduces the ReMax objective and describes how we estimate its policy gradient.

\subsection{ReMax Objective} \label{sec:remax:remax_objective}
Optimizing returns over multiple trajectories can promote exploration without explicit exploration bonuses
\citep{walder2025pass,nishimori2026emergence,koyamada2023emergence}.
In episodic RL, directly optimizing such retry-based trajectory objectives often assumes the ability to reset the environment to arbitrary states
\citep{koyamada2023emergence}, which is typically infeasible in continuous-control tasks.
We therefore adopt the Q-function-based deterministic ReMax formulation
\citep{nishimori2026emergence}.
Given a Q-function $Q: \mathcal{S} \times \mathcal{A} \to \mathbb{R}$, the (deterministic) ReMax objective is defined as
\begin{equation}\label{eq:remax_objective}
    \mathcal{J}^M(\pi,Q)
    :=
    \E[s \sim p_0]{
    \mathcal{J}^M(\pi,Q, s)
    }, \quad \text{where} \quad
    \mathcal{J}^M(\pi,Q, s)
    :=
    \E[a_{1:M} \overset{\iid}{\sim} \pi(\cdot|s)]{
        \max_{m=1:M} Q(s,a_m)
    }
\end{equation}
where $M \in \mathbb{N}$ is the retry budget and $\pi$ is the policy.
When $M=1$ and $Q=Q^\pi$, this objective $\mathcal{J}^1(\pi,Q^\pi)$ reduces to the standard RL objective.

Importantly, the ReMax objective preserves the deterministic optimal policy.
Since an MDP admits an optimal deterministic policy under standard conditions \citep{sutton1998intro}, assume that there exists a deterministic optimal policy $\pi^*$ satisfying $Q^{\pi^*}=Q^*$ and $\pi^*(s) \in \arg\max_{a \in \mathcal{A}} Q^*(s,a)$ for every state $s \in \mathcal{S}$.
Then $\pi^* \in \arg\max_{\pi} \mathcal{J}^M(\pi,Q^\pi, s)$ for any finite $M \ge 1$.
Indeed, for any state $s$ and policy $\pi$,
\[
    \E[a_{1:M} \overset{\iid}{\sim} \pi(\cdot|s)]{
        \max_{m=1:M} Q^\pi(s,a_m)
    }
    \le
    \sup_a Q^*(s,a)
    =
    \E[a_{1:M} \overset{\iid}{\sim} \pi^*(\cdot|s)]{
        \max_{m=1:M} Q^*(s,a_m)
    }.
\]
Therefore, we have $\mathcal{J}^M(\pi, Q^\pi) \leq \mathcal{J}^M(\pi^*, Q^{\pi^*})$.
In contrast, entropy-regularized objectives shift the optimum toward higher-entropy policies and hence bias the optimum.

\subsection{Gradient Estimation of the ReMax Objective} \label{sec:remax:gradient_estimation}
We consider optimizing Eq.~\eqref{eq:remax_objective} using policy gradients.
\citet{nishimori2026emergence} showed that in discrete action spaces, both the objective and its gradient can be computed exactly.
This approach does not extend to continuous action spaces because it would require sorting Q-values over an uncountable action set.
We therefore use sample-based estimators \citep{parmas2018pipps,parmas2018total}.
Let $\pi_\theta$ be a policy parameterized by $\theta \in \mathbb{R}^p$.
For a state $s$, we sample $a_{1:B} \overset{\iid}{\sim} \pi_\theta(\cdot\mid s)$ with $B \ge M$ and compute $q_i = Q(s,a_i)$.
In practice, $Q$ is approximated by a neural network $Q_\phi$ with parameters $\phi$.
We consider two gradient estimators.

\paragraph{Likelihood ratio (LR) gradient estimator.}
The likelihood-ratio estimator approximates the gradient of $\E[a\sim\pi_\theta]{f(a)}$ ($f:\mathcal{A}\to\mathbb{R}$) as $\E[a\sim\pi_\theta]{\nabla_\theta \log \pi_\theta(a\mid s) f(a)}$, also known as the REINFORCE estimator \citep{Williams1992}.
Because Eq.~\eqref{eq:remax_objective} involves a maximum over $M$ coupled action samples, the single-sample LR form above is not directly applicable as written.
Nevertheless, \citet{walder2025pass} derived an LR-style estimator for deterministic ReMax:
$\hat{g}=\frac{1}{B}\sum_{i=1}^B \nabla_\theta \log \pi_\theta(a_i\mid s)\,\hat{A}_i$,
where $\hat{A}_i$ is an advantage estimate.
We refer the reader to \citet{walder2025pass} for details.

\paragraph{Reparameterization (RP) gradient estimator.}
Alternatively, we can estimate the gradient via the reparameterization trick \citep{kingma2013auto}.
Assume $\pi_\theta$ is Gaussian with mean $\mu$ and standard deviation $\sigma$ ($\theta=(\mu,\sigma)$).
Actions are sampled as $a_i(\mu,\sigma)=\mu+\sigma\epsilon_i$, where $\epsilon_i\sim\mathcal{N}(0,1)$.
Thus $\frac{\partial a_i}{\partial\mu}=1$ and $\frac{\partial a_i}{\partial\sigma}=\epsilon_i$.
For higher-dimensional actions, see \citet{rezende2014stochastic}.
When $Q$ is differentiable, we can compute $\nabla_{a_i}Q(s,a_i)$ and backpropagate through $q_i$.
\citet{walder2025pass} proposed an unbiased estimator of Eq.~\eqref{eq:remax_objective} (for the given $Q$) from $q_{1:B}$ (sorted in ascending order with ties broken randomly):
\begin{equation} \label{eq:remax_objective_rp}
    \rho^M(q_{1:B}) := \frac{1}{\binom{B}{M}} \sum_{i=1}^B w_i q_i,
\end{equation}
where $w_i = \binom{i-1}{M-1}$ counts the number of size-$M$ subsets for which the $i$-th sorted action is the best; the factor $\binom{B}{M}^{-1}$ converts this count into a probability.
Because $\rho^M$ is linear in $q_{1:B}$, $\nabla_\theta \rho^M(q_{1:B})$ can be obtained by automatic differentiation.
\citet{walder2025pass} did not explore this RP estimator because their focus was LLM post-training, where rewards are typically non-differentiable.

\paragraph{Our choice.}
The two estimators have different trade-offs.
LR gradients can remain unbiased with raw returns, which suits on-policy learning, but they often suffer from high variance \citep{greensmith2004variance}.
RP gradients require a differentiable Q-function but typically (but not always) yield lower-variance estimates \citep{parmas2018pipps}.
In continuous control, off-policy actor–critic methods \citep{degris2012off,haarnoja2018sac,fujimoto2018addressing} are widely used for their sample efficiency.
These methods rely on replay buffers that store transitions from past behavior policies and update both the Q-function and policy off-policy.
As a result, obtaining raw-return targets for arbitrary state–action pairs is difficult, which weakens the main advantage of LR estimators.
Thus, in this setting, LR offers limited benefits while retaining high variance, whereas RP estimators are standard and effective.
We therefore adopt the RP estimator.

\section{Gradient Analysis} \label{sec:grad_analysis}

We analyze how the ReMax objective promotes exploration through policy gradients.
We show that ReMax promotes exploration even with deterministic rewards by reshaping the gradient landscape so that policy entropy remains higher along the optimization path.
When the policy is far from the optimum and has low entropy, the gradient increases entropy through its direction; near the optimum, the gradient norm is damped through its magnitude.
We provide a theoretical explanation for these properties (Sec.~\ref{sec:grad_analysis:theoretical_explanation}) and show that Adam's normalization can mitigate the damping effect (Sec.~\ref{sec:grad_analysis:adam_effect}), offering practical guidance for continuous control.
To illustrate this effect, we consider a one-dimensional Gaussian policy with mean $\mu \in \mathbb{R}$ and standard deviation $\sigma > 0$, i.e., $a \sim \mathcal{N}(\mu,\sigma^2)$.
We use the quadratic reward $r(a) = -a^2$.
\subsection{Vector field analysis} \label{sec:grad_analysis:vector_field}
In Fig.~\ref{fig:vector_field}, we visualize the vector field of the RP gradient estimator (Sec.~\ref{sec:remax:gradient_estimation}) with batch size $B=16$ over $\mu\in[-2,2]$ and $\sigma\in[0.1,2]$, for $M\in\{1,4,8\}$.
For reference, we also plot the entropy-regularized objective for $M=1$, $r(a)=-a^2+\alpha\frac12\log(2\pi\sigma^2)$ with $\alpha=0.5$.
For $M=1$, the gradient with respect to $\sigma$ is always negative.
For $M>1$, ReMax exhibits two exploration-promoting properties:

\begin{itemize}
\item \textbf{Entropy Increase (direction):}
When the mean is far from the optimum and $\sigma$ is small, the gradient increases $\sigma$, inducing stochastic exploration.
\item \textbf{Gradient Damping (magnitude):}
As $M$ increases, the gradient norm near the optimal mean (small $\mu$ and large $\sigma$) decreases, slowing convergence and helping sustain larger entropy.
\end{itemize}

More details are provided in App.~\ref{sec:app:gradient:vector_field}.
ReMax converges to the deterministic optimum $(\mu,\sigma)=(0,0)$, whereas entropy regularization converges to $(0,0.5)$.
Together, these effects keep entropy high during optimization: it first increases due to the directional effect and then decreases gradually due to damping.

\subsection{Theoretical Explanation} \label{sec:grad_analysis:theoretical_explanation}

We now provide intuition for how ReMax with $M>1$ shapes exploration dynamics in high-dimensional action spaces.
The vector-field analysis in Sec.~\ref{sec:grad_analysis:vector_field} is one-dimensional, but the underlying mechanism is not specific to one dimension.
When the policy scale is small, the selected action is determined by the projection of Gaussian perturbations onto the local improvement direction.
Thus, ReMax favors samples that move the action toward lower cost, yielding a positive scale gradient when the current policy mean is not stationary.
Near the optimum, selecting the best action among $M$ samples makes the selected action close to the optimum, which damps the gradient magnitude.
These two effects correspond to the entropy-increase and gradient-damping behaviors observed in the vector field.

\paragraph{Setup and assumptions.}

Let $\mu\in\mathbb R^d$ and $\sigma>0$ denote the mean and scalar scale of an isotropic Gaussian policy.
For $m=1,\ldots,M$, let
\[
    A_m=\mu+\sigma\epsilon_m,
    \qquad
    \epsilon_m\stackrel{\iid}{\sim}\mathcal N(0,I_d).
\]
We consider deterministic rewards of the form $r(a)=-c(a)$ with $c:\mathbb R^d\to\mathbb R$.
For $M\ge1$, define
$J^M(\mu,\sigma) = \mathbb E\left[ \max_{m=1,\ldots,M} -c(A_m) \right]$.
Equivalently,
$J^M(\mu,\sigma) = - \mathbb E\left[ \min_{m=1,\ldots,M} c(A_m) \right]$.
Let
$m^\star\in\arg\min_{m=1,\ldots,M}c(A_m)$ denote the selected action.
These assumptions are introduced for analytical tractability rather than to model deep RL environments globally, allowing us to study how ReMax shapes the local gradient landscape.

\begin{assumption}[Smoothness]\label{ass:smoothness}
$c$ is differentiable and $L$-smooth, i.e.,
$\|\nabla c(a)-\nabla c(b)\| \le L\|a-b\|$ for all $a,b\in\mathbb R^d$.
\end{assumption}

\begin{assumption}[Non-degenerate selection]\label{ass:non_degenerate_selection}
There exists $\bar\sigma>0$ such that, for every $\sigma\in(0,\bar\sigma)$, the random variable
$c(\mu+\sigma\epsilon)$ has no atoms for $\epsilon\sim\mathcal N(0,I_d)$.
\end{assumption}

This assumption ensures that the selected sample is almost surely unique.
It holds, for example, when every level set $\{a:c(a)=t\}$ has Lebesgue measure zero.
As shown in App.~\ref{sec:app:proofs_of_propositions_in_sec_grad_analysis}, this condition is automatically satisfied under Assumptions~\ref{ass:smoothness}, \ref{ass:centered_optimum}, and~\ref{ass:strong_convexity}.

\begin{assumption}[Centered optimum]\label{ass:centered_optimum}
The cost is normalized so that $c(0)=0$ and $\nabla c(0)=0$.
\end{assumption}

\begin{assumption}[Strong convexity]\label{ass:strong_convexity}
$c$ is $\lambda$-strongly convex, i.e.,
\[
    c(b)
    \ge
    c(a)+\nabla c(a)^\top(b-a)+\frac{\lambda}{2}\|b-a\|^2
\]
for all $a,b\in\mathbb R^d$.
\end{assumption}

\begin{assumption}[Second-order regularity]\label{ass:second_order_regularity}
$c$ is twice continuously differentiable.
\end{assumption}

These assumptions are separated to make clear which part of the analysis relies on which regularity condition.
The entropy-increase result relies only on local smoothness and non-degenerate sample selection.
The entropy-decrease result for \(M=1\) additionally uses second-order regularity and strong convexity.
The gradient-damping result uses smoothness, strong convexity, and the normalization that the optimum is centered at the origin.
These assumptions are not intended to be the weakest possible conditions.
Rather, they provide a clean first setting in which the two mechanisms observed in the vector field can be isolated and proved rigorously.
In particular, smoothness and strong convexity are standard assumptions in first-order optimization theory~\citep{nesterov2013introductory,bubeck2015convex}, while the non-degenerate selection condition is a technical condition used to handle the best-of-\(M\) operator.
Thus, although these assumptions do not fully reflect practical deep-RL settings with neural networks, they are appropriate for a first theoretical analysis of the local mechanisms induced by ReMax.
\paragraph{Entropy increase effect for $M>1$.}

We first show that ReMax can increase policy entropy when the current mean is not stationary.
This result does not require global convexity.
It only uses the local first-order geometry of the cost around the current policy mean.

\begin{proposition}[Entropy Increase Effect]\label{prop:entropy_increase}
Assume Assumptions~\ref{ass:smoothness} and~\ref{ass:non_degenerate_selection}.
Let $M\ge2$ and suppose that $\nabla c(\mu)\neq0$.
Then, there exists $\sigma_0>0$ such that
\[
    \partial_\sigma J^M(\mu,\sigma)>0
    \qquad
    \text{for all }\sigma\in(0,\sigma_0).
\]
\end{proposition}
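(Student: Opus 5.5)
We plan to compute $\partial_\sigma J^M$ by differentiating under the expectation using the pathwise (reparameterization) representation, and then take a first-order expansion in $\sigma$. Write $J^M(\mu,\sigma)=-\mathbb E[c(\mu+\sigma\epsilon_{m^\star})]$. Assumption~\ref{ass:non_degenerate_selection} makes the minimizer $m^\star$ almost surely unique. Since $c$ is $C^1$, the map $\sigma\mapsto\min_m c(\mu+\sigma\epsilon_m)$ is a minimum of finitely many $C^1$ functions. It is therefore locally Lipschitz, and it is differentiable at every $\sigma$ where the minimizer is unique, with derivative $\nabla c(A_{m^\star})^\top\epsilon_{m^\star}$. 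For dominated convergence, $L$-smoothness gives $|\nabla c(\mu+\sigma\epsilon)^\top\epsilon|\le(\|\nabla c(\mu)\|+L\sigma\|\epsilon\|)\|\epsilon\|$, which is integrable uniformly for $\sigma$ in compact sets. One clean route is to write $J^M(\mu,\sigma)-J^M(\mu,0)$ as $-\int_0^\sigma\mathbb E[\nabla c(A_{m^\star}(t))^\top\epsilon_{m^\star}(t)]\,dt$, using absolute continuity pathwise and Fubini, and then argue continuity of the integrand in $t$. This yields
\[
\partial_\sigma J^M(\mu,\sigma)=-\mathbb E\bigl[\nabla c(\mu+\sigma\epsilon_{m^\star})^\top\epsilon_{m^\star}\bigr].
\]

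Next, set $g=\nabla c(\mu)\neq0$ and $u=g/\|g\|$, and split the integrand as
\[
-\nabla c(\mu+\sigma\epsilon_{m^\star})^\top\epsilon_{m^\star}=-g^\top\epsilon_{m^\star}-\bigl(\nabla c(\mu+\sigma\epsilon_{m^\star})-g\bigr)^\top\epsilon_{m^\star}.
\]
By smoothness, the second term is bounded in absolute value by $L\sigma\|\epsilon_{m^\star}\|^2\le L\sigma\sum_m\|\epsilon_m\|^2$, so its expectation is at most $L\sigma Md$. The main term is $\|g\|\,\mathbb E[-u^\top\epsilon_{m^\star(\sigma)}]$, and we must show it stays bounded below by a positive constant as $\sigma\to0$. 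The descent lemma gives
\[
c(\mu+\sigma\epsilon_m)=c(\mu)+\sigma\|g\|\,u^\top\epsilon_m+O(\sigma^2\|\epsilon_m\|^2).
\]
Hence, as $\sigma\to0$, $m^\star(\sigma)$ coincides with $\arg\min_m u^\top\epsilon_m$ on every event where the gap between the smallest and second-smallest $u^\top\epsilon_m$ exceeds $C\sigma\max_m\|\epsilon_m\|^2/\|g\|$. This event has probability tending to $1$. Consequently $-u^\top\epsilon_{m^\star(\sigma)}\to\max_m(-u^\top\epsilon_m)$ almost surely, and this convergence is dominated by $\sum_m\|\epsilon_m\|$.

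Since the variables $-u^\top\epsilon_m$ are i.i.d. $\mathcal N(0,1)$, the limit equals $\|g\|\,\mathbb E[\max_{m\le M}Z_m]$. This is strictly positive for $M\ge2$, because $\mathbb E\max(Z_1,Z_2)=1/\sqrt\pi$ and the expected maximum is nondecreasing in $M$. Combining the pieces, $\liminf_{\sigma\to0}\partial_\sigma J^M\ge\|g\|\,\mathbb E\max_m Z_m>0$. Because the limit argument controls $\partial_\sigma J^M$ for all small $\sigma$, not just along a sequence, this gives $\sigma_0$.

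The main obstacle is the first step: justifying differentiation under the expectation through the nonsmooth $\min$, together with the almost-sure identification of the limiting selected index. In particular, ties in $u^\top\epsilon_m$ have probability zero, and Assumption~\ref{ass:non_degenerate_selection} handles ties for each fixed $\sigma$. If the integral-representation route becomes awkward, a fallback is to bound the difference quotient $(J^M(\mu,\sigma+h)-J^M(\mu,\sigma))/h$ directly. The envelope-type inequalities $\min_m f_m(\sigma+h)-\min_m f_m(\sigma)\le f_{m^\star}(\sigma+h)-f_{m^\star}(\sigma)$, and the reverse inequality with $m^\star(\sigma+h)$, yield the same one-sided derivative bounds.
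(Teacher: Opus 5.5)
Your proposal is correct and follows essentially the paper's route. You justify $\partial_\sigma J^M=-\mathbb E[\nabla c(A_{m^\star})^\top\epsilon_{m^\star}]$ by pathwise Danskin plus dominated convergence, identify $m^\star$ almost surely with $\arg\min_m g^\top\epsilon_m$ for small $\sigma$ via the $L$-smooth Taylor remainder and a positive gap, and pass to the limit $\|\nabla c(\mu)\|\,\mathbb E[\max_m Z_m]>0$ by dominated convergence. Your only deviation is splitting off $(\nabla c(A_{m^\star})-g)^\top\epsilon_{m^\star}$ and bounding it by $L\sigma Md$, where the paper applies continuity of $\nabla c$ inside the dominated-convergence step; this is a minor quantitative refinement, not a different argument.
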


The proof is given in App.~\ref{sec:app:proof:entropy_increase}.
The intuition is that ReMax can exploit upward fluctuations in reward across multiple samples.
Since $r(a)=-c(a)$, this is equivalent to selecting a sample with a downward fluctuation in cost.
Let $g=\nabla c(\mu)$.
For small $\sigma$, the cost of each sampled action is locally approximated as $c(\mu+\sigma\epsilon_m)=c(\mu)+\sigma g^\top\epsilon_m+O(\sigma^2\|\epsilon_m\|^2)$.
Thus, to first order, the selected action is the one with the smallest projection $g^\top\epsilon_m$, or equivalently the largest reward fluctuation $-g^\top\epsilon_m$.
For a single sample, this fluctuation has zero mean.
With $M\ge2$, however, taking the maximum over samples creates a positive expected upward fluctuation, since $\mathbb E[\max_m(-g^\top\epsilon_m)]=-\mathbb E[\min_m g^\top\epsilon_m]=\|\nabla c(\mu)\|\mathbb E[\max_m Z_m]>0$.
Therefore, when $\sigma$ is small, increasing the policy scale makes these favorable fluctuations more available, yielding a positive scale gradient.
This is the high-dimensional analogue of the entropy-increase direction observed in Sec.~\ref{sec:grad_analysis:vector_field}.

\paragraph{Entropy decrease for $M=1$.}

The previous effect contrasts with the standard single-sample objective.
When $M=1$, increasing the policy scale only injects noise into the cost, and the scale gradient is always negative under strong convexity.

\begin{proposition}[Entropy Decrease for $M=1$]\label{prop:variance_decrease}
Assume Assumptions~\ref{ass:smoothness}, \ref{ass:strong_convexity}, and~\ref{ass:second_order_regularity}.
For $M=1$, $\mu\in\mathbb R^d$, and $\sigma>0$,
\[
    \partial_\sigma J^1(\mu,\sigma)
    \le
    -\lambda d\sigma
    <
    0.
\]
\end{proposition}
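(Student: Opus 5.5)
We prove Proposition~\ref{prop:variance_decrease} by differentiating under the expectation using the reparameterization $A=\mu+\sigma\epsilon$. For $M=1$ we have $J^1(\mu,\sigma)=-\mathbb E[c(\mu+\sigma\epsilon)]$ with $\epsilon\sim\mathcal N(0,I_d)$.

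\textbf{Step 1: the reparameterized derivative.} We first justify exchanging derivative and expectation. By Assumption~\ref{ass:smoothness}, $\|\nabla c(\mu+\sigma\epsilon)\|\le\|\nabla c(\mu)\|+L\sigma\|\epsilon\|$, so $|\epsilon^\top\nabla c(\mu+\sigma\epsilon)|$ is dominated, locally uniformly in $\sigma$, by an integrable function of $\epsilon$ (Gaussian moments are finite). This gives
\[
\partial_\sigma J^1(\mu,\sigma)=-\mathbb E\bigl[\epsilon^\top\nabla c(\mu+\sigma\epsilon)\bigr].
\]

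\textbf{Step 2: Gaussian integration by parts.} Stein's lemma $\mathbb E[\epsilon_i f(\epsilon)]=\mathbb E[\partial_i f(\epsilon)]$ is applied to $f(\epsilon)=\partial_i c(\mu+\sigma\epsilon)$, which is $C^1$ by Assumption~\ref{ass:second_order_regularity}. Its derivative is bounded by $L\sigma$ in operator norm, so integrability holds. Summing over $i$ yields
\[
\partial_\sigma J^1(\mu,\sigma)=-\sigma\,\mathbb E\bigl[\operatorname{tr}\nabla^2 c(\mu+\sigma\epsilon)\bigr].
\]

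\textbf{Step 3: the bound.} Assumption~\ref{ass:strong_convexity} together with twice differentiability gives $\nabla^2 c(a)\succeq\lambda I_d$ at every $a$. Hence $\operatorname{tr}\nabla^2 c\ge\lambda d$ pointwise, and therefore $\partial_\sigma J^1(\mu,\sigma)\le-\lambda d\sigma<0$, using $\sigma>0$ and $\lambda>0$.

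\textbf{Alternative route and main obstacle.} Second-order regularity could be avoided. Strong monotonicity of the gradient, $(\nabla c(\mu+\sigma\epsilon)-\nabla c(\mu))^\top\sigma\epsilon\ge\lambda\sigma^2\|\epsilon\|^2$, combined with $\mathbb E[\epsilon^\top\nabla c(\mu)]=0$, gives $\mathbb E[\epsilon^\top\nabla c(\mu+\sigma\epsilon)]\ge\lambda\sigma\,\mathbb E\|\epsilon\|^2=\lambda\sigma d$ directly. I would mention this as a check. The only delicate point in either route is justifying the interchange of differentiation and expectation, and the integration by parts. Both are handled by the linear growth of $\nabla c$ from $L$-smoothness together with the Gaussian tails.
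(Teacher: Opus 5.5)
Your proof is correct and follows the paper's argument essentially step for step: differentiate under the expectation after reparameterizing (justified by the linear growth of $\nabla c$), apply Gaussian integration by parts to get $-\sigma\,\mathbb E[\operatorname{tr}\nabla^2 c(\mu+\sigma\epsilon)]$, then use the pointwise bound $\nabla^2 c\succeq\lambda I_d$. Your alternative strong-monotonicity argument is also valid and a bit more economical, since it gives the same bound $\partial_\sigma J^1\le-\lambda d\sigma$ without Assumption~\ref{ass:second_order_regularity} or Stein's lemma.
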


The proof is given in App.~\ref{sec:app:proof:variance_decrease}.
This result explains why the $M=1$ vector field consistently shrinks the policy scale.
Without best-of-$M$ selection, the objective directly penalizes variance around the current mean.

\paragraph{Gradient damping effect for $M>1$.}

We next bound the gradient magnitudes by the expected distance of the best sampled action to the optimum.

\begin{proposition}[Gradient Damping]\label{prop:gradient_damping}
Assume Assumptions~\ref{ass:smoothness}, \ref{ass:centered_optimum}, and~\ref{ass:strong_convexity}.
Let $M\ge1$ and $\sigma>0$.
Then, the gradients exist and satisfy
\[
    \|\nabla_\mu J^M(\mu,\sigma)\|
    \le
    L\sqrt{\frac{L}{\lambda}}\,
    \mathbb E\left[
        \min_{m=1,\ldots,M}\|A_m\|
    \right],
\]
and
\[
    |\partial_\sigma J^M(\mu,\sigma)|
    \le
    L\sqrt{\frac{L}{\lambda}}\,
    \mathbb E\left[
        \min_{m=1,\ldots,M}\|A_m\|
        \|\epsilon_{m^\star}\|
    \right].
\]
\end{proposition}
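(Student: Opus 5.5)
The plan is to differentiate under the expectation via the pathwise (envelope) formula and then bound the resulting integrand pointwise. By Assumption~\ref{ass:strong_convexity}, every level set of $c$ has Lebesgue measure zero. Hence $c(A_m)$ has no atoms and the minimizer $m^\star$ is almost surely unique, which is the remark after Assumption~\ref{ass:non_degenerate_selection}. On the full-measure event where the minimizer is unique, the map $(\mu,\sigma)\mapsto \min_m c(\mu+\sigma\epsilon_m)$ is differentiable. Its derivative is that of the active branch:
\[
\nabla_\mu \min_m c(A_m)=\nabla c(A_{m^\star}),\qquad \partial_\sigma \min_m c(A_m)=\nabla c(A_{m^\star})^\top\epsilon_{m^\star}.
\]
To justify exchanging derivative and expectation, I will use that a minimum of finitely many locally Lipschitz functions is locally Lipschitz in $(\mu,\sigma)$. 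By Assumption~\ref{ass:smoothness} and Assumption~\ref{ass:centered_optimum}, $\|\nabla c(a)\|\le L\|a\|$. So on a neighborhood of $(\mu,\sigma)$ the difference quotients are dominated by $L\sum_m(\|\mu\|+1+(\sigma+1)\|\epsilon_m\|)(1+\|\epsilon_m\|)$, which is integrable. Dominated convergence together with almost-everywhere differentiability then gives
\[
\nabla_\mu J^M=-\mathbb E[\nabla c(A_{m^\star})],\qquad \partial_\sigma J^M=-\mathbb E[\nabla c(A_{m^\star})^\top\epsilon_{m^\star}].
\]
This also establishes existence of the gradients.

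The key pointwise step is to bound $\|\nabla c(A_{m^\star})\|$ by $L\sqrt{L/\lambda}\,\min_m\|A_m\|$. This step is the reason for the factor $\sqrt{L/\lambda}$, and it is where the argument could fail, so I will check it carefully. First, $\|\nabla c(A_{m^\star})\|\le L\|A_{m^\star}\|$, using $\nabla c(0)=0$ and $L$-smoothness. Next, strong convexity at $0$ (where $c(0)=0$ and $\nabla c(0)=0$) gives $c(a)\ge \frac{\lambda}{2}\|a\|^2$. The descent lemma for $L$-smooth $c$ gives $c(a)\le \frac{L}{2}\|a\|^2$. Because $m^\star$ minimizes $c(A_m)$, for every $m$ we have
\[
\tfrac{\lambda}{2}\|A_{m^\star}\|^2\le c(A_{m^\star})\le c(A_m)\le \tfrac{L}{2}\|A_m\|^2.
\]
Therefore $\|A_{m^\star}\|\le\sqrt{L/\lambda}\,\min_m\|A_m\|$, and consequently $\|\nabla c(A_{m^\star})\|\le L\sqrt{L/\lambda}\,\min_m\|A_m\|$.

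Finally, Jensen's inequality (the norm of an expectation is at most the expectation of the norm) gives the $\mu$-bound. Cauchy--Schwarz gives $|\nabla c(A_{m^\star})^\top\epsilon_{m^\star}|\le\|\nabla c(A_{m^\star})\|\,\|\epsilon_{m^\star}\|$, which yields the $\sigma$-bound with $\min_m\|A_m\|\,\|\epsilon_{m^\star}\|$ inside the expectation. The main technical obstacle is the interchange of differentiation and expectation with a nonsmooth $\min$. I expect the local Lipschitz domination plus almost-sure uniqueness of $m^\star$ to handle it cleanly. If needed, I will instead use the identity $\min_m c(A_m)=\sum_m c(A_m)\mathbf 1\{m=m^\star\}$ and argue that the indicator is locally constant almost surely.
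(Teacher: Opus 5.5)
Your proposal is correct and follows essentially the same route as the paper's proof: almost-sure uniqueness of $m^\star$ via measure-zero level sets, the Danskin/envelope formula with dominated convergence, and then the sandwich $\frac{\lambda}{2}\|A_{m^\star}\|^2\le\min_m c(A_m)\le\frac{L}{2}\min_m\|A_m\|^2$, followed by Jensen and Cauchy--Schwarz. Your justification of the interchange, which dominates the difference quotients by a local Lipschitz constant on a neighborhood of $(\mu,\sigma)$, is slightly more careful than the paper's, which only bounds the derivative at the point itself.
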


The proof is given in App.~\ref{sec:app:proof:gradient_damping}.
These bounds capture the ``forgiveness'' of the minimum operator.
In standard RL ($M=1$), large $\sigma$ near the optimum produces poor actions and strong negative gradients that rapidly shrink $\sigma$.
In contrast, ReMax evaluates only the best action.
When $\mu$ is near $0$, increasing $M$ raises the probability that at least one action lies close to the optimum, reducing the selected deviation and damping gradients.
When the policy mean is far from stationary and $\sigma$ is small, the projection effect in Proposition~\ref{prop:entropy_increase} dominates and increases the policy scale.
Together, these effects can keep entropy high during optimization: it first increases due to the directional effect and then decreases gradually due to damping.

\subsection{Effect of Adam's normalization} \label{sec:grad_analysis:adam_effect}
\begin{figure}[t]
    \centering
    \includegraphics[width=1.0\textwidth]{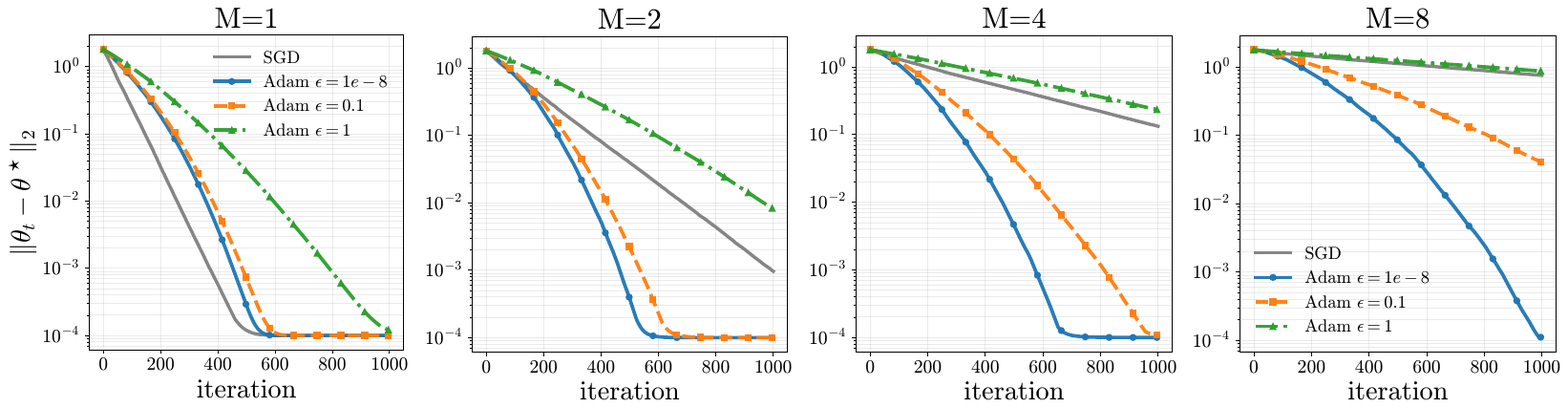}
    \caption{
        The distance to the optimal parameter $(\mu, \sigma) = (0, 0)$ for Adam with different $\epsilon$ values and for SGD.
        As the vector field indicates, the convergence speed slows as $M$ increases in every case.
        However, with smaller $\epsilon$ the convergence near the optimum remains fast, whereas larger $\epsilon$ aligns the trajectory with the SGD path that follows the vector field.
    }
    \label{fig:adam_effect}
\end{figure}

Although the vector-field analysis suggests that ReMax with $M>1$ promotes stochastic exploration, the actual optimization trajectory need not follow the raw vector field.
With Adam \citep{kingma2014adam}, adaptive normalization can counteract gradient damping, leading to faster convergence and faster entropy collapse.
Adam updates parameters as
$\theta_{t+1} = \theta_t - \alpha \frac{\hat m_t}{\sqrt{\hat v_t}+\epsilon}$.
If $\epsilon$ is small, $\frac{\hat m_t}{\sqrt{\hat v_t}+\epsilon} \approx \frac{\hat m_t}{\sqrt{\hat v_t}}$, effectively normalizing gradients and reducing damping.
To illustrate this effect, we optimize $a \sim \mathcal{N}(\mu, \sigma^2)$ with $r(a) = -a^2$ using Adam with $\epsilon \in \{10^{-8}, 10^{-1}, 1.0\}$.
We set $\alpha=0.01$, $B=64$, $T=1000$, and $M\in\{1,2,4,8\}$, starting from $(\mu,\sigma)=(-1.5,1.0)$.
Fig.~\ref{fig:adam_effect} shows the distance to $(0,0)$ for Adam and SGD.
As expected, convergence slows as $M$ increases.
However, smaller $\epsilon$ mitigates damping: with $M=4,8$, $\epsilon=10^{-8}$ converges much faster than SGD, whereas $\epsilon=1.0$ aligns with SGD.
These results suggest a practical guideline.
Adam benefits from the directional entropy-increase effect, but with the default $\epsilon=10^{-8}$, damping is weakened.
Increasing $\epsilon$ restores damping but reduces the effective step size.
Jointly tuning $\epsilon$ and the learning rate may therefore better control optimization dynamics during training.
More details are provided in App.~\ref{sec:app:gradient:adam_effect}.

\section{ReMax Actor-Critic Algorithm} \label{sec:algorithm}
\begin{algorithm}[t]
    \caption{ReMAC}
    \label{alg:remax_actor_critic}
    \begin{algorithmic}[1]
    \REQUIRE Retry parameter $M$, action batch size $B$, discount $\gamma$, learning rates $\alpha,\beta$, target critic update rate $\tau$.
    \STATE Initialize policy $\pi_\theta$, critics $Q_{\phi_1},Q_{\phi_2}$, target critics $Q_{\bar\phi_1},Q_{\bar\phi_2}$, and replay buffer $\mathcal D$

    \FOR{each iteration $t$}

    \FOR{each environment step}
    \STATE $a \sim \pi_\theta(\cdot|s),\ (r,s')\sim\mathcal P(s,a)$ \RightComment{interact with environment}
    \STATE $\mathcal D \leftarrow \mathcal D \cup \{(s,a,r,s')\}$ \RightComment{store transition}
    \ENDFOR

    \FOR{each gradient step}
    \STATE $(s,a,r,s') \sim \mathcal D,\ a' \sim \pi_\theta(\cdot|s')$ \RightComment{sample minibatch and next action}
    \STATE $y \leftarrow r+\gamma \min(Q_{\bar\phi_1}(s',a'),Q_{\bar\phi_2}(s',a'))$ \RightComment{target value}
    \STATE $\phi_j \leftarrow \phi_j-\alpha\nabla_{\phi_j}(Q_{\phi_j}(s,a)-y)^2,\ j\in\{1,2\}$ \RightComment{update critics}
    \STATE $a_{1:B}\sim\pi_\theta(\cdot|s),\ q_{i}\leftarrow \min_{j=1,2} Q_{\phi_j}(s,a_{i}) \forall i \in \{1,2,\ldots,B\}$ \RightComment{sample actions and evaluate Q}
    \STATE $\theta \leftarrow \theta+\beta\nabla_\theta \rho^M(q_{1:B})$ \RightComment{update policy}
    \STATE $\bar\phi_j \leftarrow \tau\phi_j+(1-\tau)\bar\phi_j,\ j\in\{1,2\}$ \RightComment{update target critics}
    \ENDFOR

    \ENDFOR
    \end{algorithmic}
    \end{algorithm}
Having identified the exploration-promoting properties of ReMax, we instantiate it in a simple off-policy actor--critic algorithm with a critic $Q_\phi$ and a stochastic policy $\pi_\theta$.
The critic is trained by temporal-difference regression on transitions sampled from a replay buffer $\mathcal{D}$, while the actor is optimized using ReMax computed from batches of action samples.

\paragraph{Critic update.}
Given a transition $(s,a,r,s')\sim\mathcal{D}$, we define the target as $y = r + \gamma Q_{\bar\phi}(s', a')$, where $a'\sim \pi_\theta(\cdot\mid s')$ and $Q_{\bar\phi}$ is a target critic.
The critic is updated by minimizing the squared Bellman error:
\begin{equation}
    L_Q(\phi):=\mathbb{E}_{(s, a, r, s') \sim \mathcal{D}}\big[(Q_\phi(s,a)-y)^2\big].
\end{equation}

\paragraph{Actor update.}
For policy optimization, let
$\pi_\theta(\cdot\mid s)=\mathcal{N}(\mu_\theta(s),\operatorname{diag}(\sigma_\theta^2(s)))$,
where $\mu_\theta(s)\in\mathbb{R}^d$ and $\sigma_\theta(s)\in\mathbb{R}_{>0}^d$ are neural-network outputs.
We sample $\epsilon_{1:B}\overset{\iid}{\sim}\mathcal{N}(0,I_d)$ and construct action samples as
$a_i^\theta=\mu_\theta(s)+\sigma_\theta(s)\odot\epsilon_i$ for $i=1,\dots,B$.
We then evaluate the Q-values $q_i=Q_\phi(s,a_i^\theta)$ and denote $q_{1:B}=(q_1,\dots,q_B)$.
The actor loss is
\begin{equation}
    L_{\mathrm{actor}}^M(\theta)
    :=
    -\mathbb{E}_{s\sim\mathcal{D},\,\epsilon_{1:B}\overset{\iid}{\sim}\mathcal{N}(0,I_d)}
    \left[
        \rho^M(q_{1:B})
    \right].
\end{equation}
We optimize the policy parameters by automatic differentiation of $L_{\mathrm{actor}}^M(\theta)$.
We call this algorithm \textbf{ReM}ax \textbf{A}ctor-\textbf{C}ritic (\textbf{ReMAC}); the full procedure is summarized in Alg.~\ref{alg:remax_actor_critic}.

\paragraph{Remarks on the implementation.}
To reduce overestimation, we use clipped double Q-learning \citep{haarnoja2018sac,fujimoto2018addressing} with two critics and take the minimum in both the target and actor loss.
In practice, we obtain ReMAC by removing the ``soft'' components of Soft Actor-Critic (SAC) \citep{haarnoja2018sac}---namely, entropy bonuses in the critic targets and temperature tuning---and replacing the actor loss with our ReMax loss.
This is a minimal off-policy actor--critic instantiation of ReMax, and many extensions are possible.
One promising direction is to maintain a posterior over the Q-function \citep{osband2016deep,osband2018randomized,osband2019deep}, making return uncertainty explicit as in ReMax \citep{koyamada2023emergence,nishimori2026emergence}.
This may enable \textit{deeper}, more structured exploration, which is crucial in sparse-reward settings.

\section{Experiments} \label{sec:experiments}

The primary goal of our empirical evaluation is \textit{not} to establish ReMAC as a new state-of-the-art exploration method for continuous control.
Instead, we treat this section as a proof-of-concept evaluation and ask the following two questions.
\begin{enumerate}
    \item Does ReMAC achieve performance comparable to SAC in continuous-control tasks?
    \item Do the entropy-related behaviors predicted by our analysis---namely, higher policy entropy for larger $M$ in ReMax---appear in practical deep RL training?
\end{enumerate}

\paragraph{Setup.}
We considered six continuous-control tasks from Brax \citep{freeman2021brax}: Ant, HalfCheetah, Hopper, Reacher, Swimmer, and Walker2d.
Agents were trained for 3 million environment steps.
We report the average return over 128 evaluation episodes, together with the standard error across 10 random seeds.

\paragraph{Baselines.}
We used SAC \citep{haarnoja2018sac} and PPO \citep{schulman2017proximal} as baselines to verify that ReMAC maintains competitive performance.
Our implementation built on \texttt{rejax} \citep{liesen2024rejax}, which provides vectorized training pipelines for SAC and PPO with tuned configurations for each Brax task; we adopted these configurations for the baselines.
We evaluate ReMAC with retry budgets \(M\in\{1,2,4,8\}\).
For \(M=1,2,4\), we set the action-sample batch size to \(B=8\), and for \(M=8\), we set \(B=16\).
For the main comparison, we used Adam's default \(\epsilon=10^{-8}\).
We swept the learning rate over \(\{10^{-4},2\times 10^{-4},3\times 10^{-4},5\times 10^{-4},10^{-3}\}\) with 3 random seeds and selected a value that performed consistently well across environments and \(M\) for fixed \(\epsilon=10^{-8}\).
As discussed in Sec.~\ref{sec:grad_analysis:adam_effect}, a small value of Adam's \(\epsilon\) can mitigate the gradient-damping effect by normalizing gradient magnitudes.
Thus, the main results with the default \(\epsilon=10^{-8}\) primarily examined whether the directional entropy-increase effect of ReMax appears in practical actor--critic training.
To study the role of Adam's \(\epsilon\) in the damping effect, we additionally varied \(\epsilon\in\{10^{-8},10^{-2},10^{-1},1.0\}\) for all \(M\); the results are provided in App.~\ref{sec:app:experiments:additional_results:entropy}.
We also tested the sensitivity of the results to the action-sample batch size \(B\in\{8,16\}\), with results reported in App.~\ref{sec:app:experiments:additional_results}.

\paragraph{Main results.}

\begin{figure}[t]
    \centering
    \includegraphics[width=1.0\textwidth]{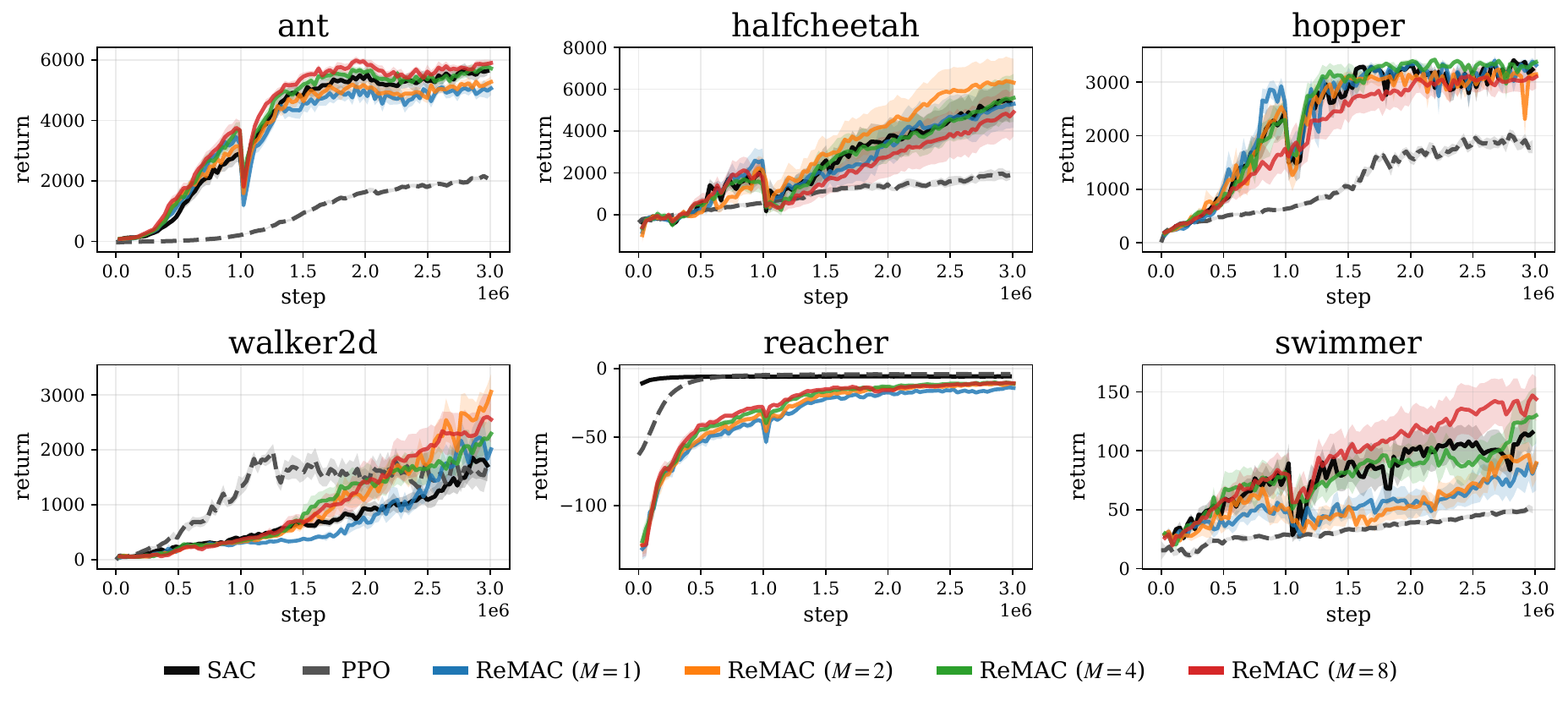}
    \caption{
        Average return of SAC, PPO, and ReMAC with different $M$ under Adam's default $\epsilon=10^{-8}$.
        The shaded area shows the standard error across 10 random seeds.
        ReMAC with $M>1$ achieves performance comparable to SAC.
    }
    \label{fig:main_result_return}
\end{figure}

\begin{figure}[t]
    \centering
    \includegraphics[width=1.0\textwidth]{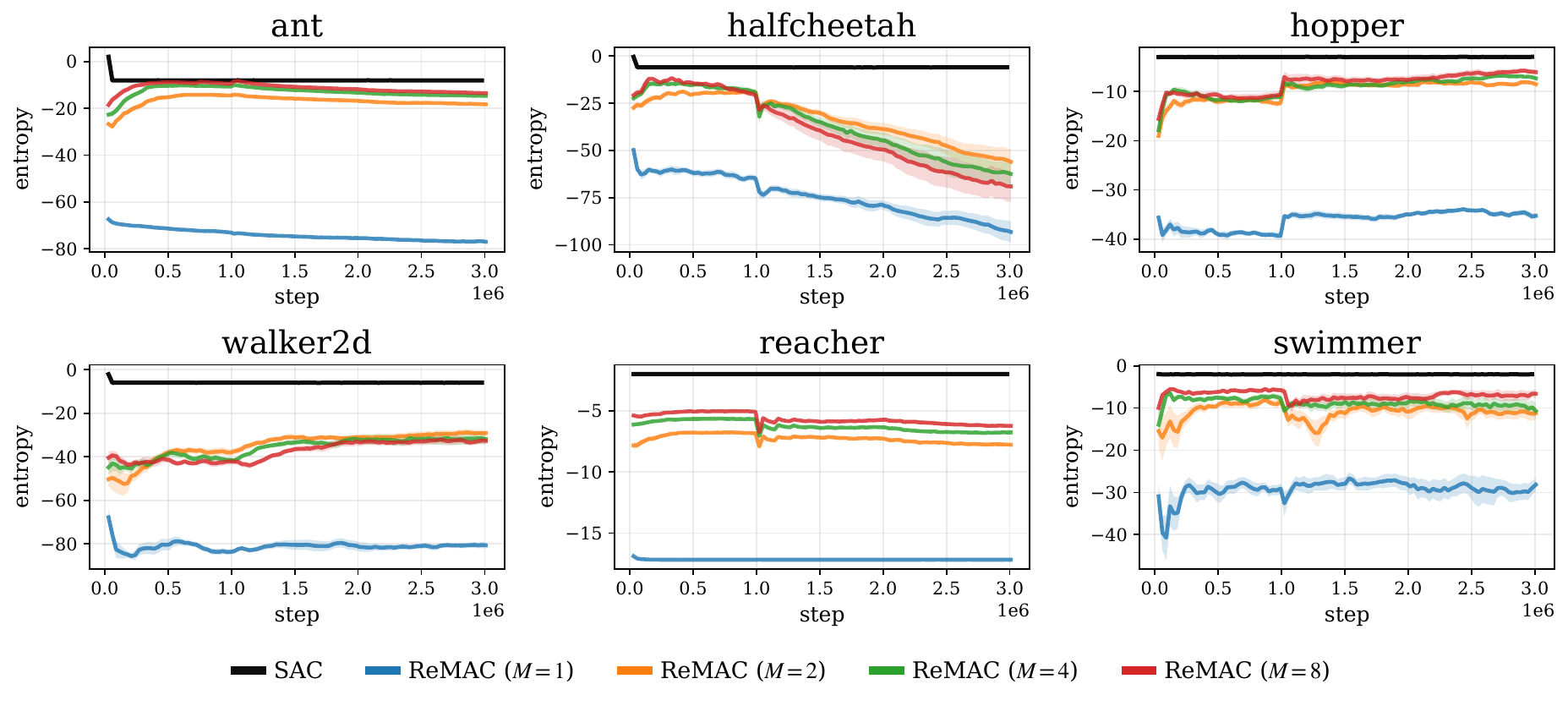}
    \caption{
        Average policy entropy of SAC and ReMAC with different $M$ under Adam's default $\epsilon=10^{-8}$.
        The shaded area shows the standard error across 10 random seeds.
        $M>1$ generally yields higher entropy than $M=1$, and entropy tends to increase with $M$ (most clearly in Ant).
    }
    \label{fig:main_entropy}
\end{figure}

Fig.~\ref{fig:main_result_return} reports the average return with Adam's default \(\epsilon=10^{-8}\).
ReMAC with \(M>1\) achieved returns comparable to those of SAC and generally performed better than PPO in these experiments.
In several environments, including Ant, Reacher, Swimmer, and Walker2d, larger retry budgets (\(M=4,8\)) achieved higher average return than \(M=1\).

Fig.~\ref{fig:main_entropy} reports the average policy entropy for different \(M\).
ReMAC with \(M>1\) yielded higher entropy than \(M=1\), and entropy generally increased with \(M\), most clearly in Ant and Swimmer.
This behavior is consistent with the ReMax property identified in Sec.~\ref{sec:grad_analysis:vector_field}, where larger \(M\) maintains higher entropy.
In some environments, such as HalfCheetah, larger \(M\) led to lower entropy late in training; for example, \(M=8\) became lower than \(M=4\) and \(M=2\).
We hypothesized that this behavior was caused by Adam's normalization counteracting the damping effect of ReMax, as discussed in Sec.~\ref{sec:grad_analysis:adam_effect}.
A more detailed analysis of the effect of \(\epsilon\) is provided in App.~\ref{sec:app:experiments:additional_results}, where increasing \(\epsilon\) tended to yield higher entropy, consistent with the damping interpretation.
Finally, although ReMAC with \(M>1\) increased entropy relative to \(M=1\), its entropy remained lower than SAC, as expected because SAC's critic incorporated future entropy \citep{haarnoja2018sac}.
From a practical perspective, these environment-step comparisons should be interpreted together with the additional computational cost of ReMAC, which arose from the extra Q-network evaluations for \(B\) action samples per state; see App.~\ref{sec:app:experiments} for details.
\section{Conclusion}
In this study, we extended the ReMax objective to continuous action spaces.
We showed that ReMax reshapes the gradient landscape, guiding the policy toward higher entropy during optimization.
We first demonstrated this effect via a toy vector-field analysis and then provided a theoretical explanation.
On the practical side, we instantiated ReMax in an off-policy actor--critic algorithm, ReMax Actor-Critic (ReMAC), which can be implemented with minimal modifications to SAC.
Empirically, we showed that ReMAC can promote higher policy entropy without entropy regularization and achieves performance comparable to SAC.

\paragraph{Limitations and future work.}
Our study has both theoretical and practical limitations.
Theoretically, our analysis assumes strong convexity and smoothness of the reward function, which may not hold in practice.
Although the goal of our analysis is to build intuition, extending it to more general settings would help us better understand the properties of ReMax.
Practically, our implementation is minimal and focuses on stochastic exploration, as discussed in Sec.~\ref{sec:related_work}; it could be improved by incorporating additional exploration mechanisms.
A particularly well-aligned direction is to model the posterior over the Q-function \citep{osband2016deep,osband2018randomized}, since ReMax explicitly considers return uncertainty.
As suggested by \citet{nishimori2026emergence,tong2026finite}, this direction may enable deeper, structured exploration, which is important in sparse-reward settings.
We hope that our analysis of ReMax's gradient properties and ReMAC's comparable performance to SAC with minimal modifications will provide a foundation for future work.

\section*{Author Contributions}
Soichiro Nishimori led the research project, conducted all experiments, performed the theoretical analysis, created the visualizations, co-conceptualized the idea and wrote the manuscript.

Paavo Parmas mentored Soichiro Nishimori, provided advice on the theoretical analysis, algorithm design, and experimental design, co-conceptualized the idea and revised the manuscript.

%%%%%%%%%%%%%%%%%%%%%%%%%%%%%%%%%%%%%%%%%%%%%%%%%%%%%%%%%%%%%%%%
%% Bibliography
%%%%%%%%%%%%%%%%%%%%%%%%%%%%%%%%%%%%%%%%%%%%%%%%%%%%%%%%%%%%%%%%
\bibliography{main}
\bibliographystyle{tmlr}

\appendix

\clearpage
\startcontents[appendix]
\section*{Appendix Contents}
\printcontents[appendix]{}{1}{\setcounter{tocdepth}{2}}

\section{Proofs of Propositions in Sec.~\ref{sec:grad_analysis}}
\label{sec:app:proofs_of_propositions_in_sec_grad_analysis}

In this section, we prove the propositions in Sec.~\ref{sec:grad_analysis}.
Throughout this section, let
\[
    A_m=\mu+\sigma\epsilon_m,
    \qquad
    \epsilon_m\stackrel{\iid}{\sim}\mathcal N(0,I_d).
\]

\begin{lemma}[No ties from non-atomic cost values]
\label{lem:no_ties_from_non_atomicity}
Suppose Assumption~\ref{ass:non_degenerate_selection} holds.
Then, for every $\sigma\in(0,\bar\sigma)$,
\[
    \mathbb P\left(
        \exists i\neq j
        \text{ such that }
        c(\mu+\sigma\epsilon_i)
        =
        c(\mu+\sigma\epsilon_j)
    \right)
    =
    0.
\]
\end{lemma}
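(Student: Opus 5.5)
The plan is a union bound over the finitely many pairs, followed by independence and Fubini for each pair. Fix $\sigma\in(0,\bar\sigma)$ and write $X_m:=c(\mu+\sigma\epsilon_m)$ for $m=1,\dots,M$. The $\epsilon_m$ are i.i.d. and $c$ is a fixed measurable function, so the $X_m$ are i.i.d. real random variables. By Assumption~\ref{ass:non_degenerate_selection}, each has a distribution $\nu$ with no atoms, i.e.\ $\nu(\{t\})=0$ for every $t\in\mathbb R$. Since $c$ is differentiable under Assumption~\ref{ass:smoothness}, hence continuous, the $X_m$ are measurable. Even without that, measurability of $c$ is implicit in the assumption. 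The event in question is $\bigcup_{i\neq j}\{X_i=X_j\}$, a finite union of at most $M(M-1)$ events, so
\[
    \mathbb P\Bigl(\exists i\neq j:\ X_i=X_j\Bigr)\le\sum_{i\neq j}\mathbb P(X_i=X_j),
\]
and it suffices to show $\mathbb P(X_i=X_j)=0$ for each fixed pair $i\neq j$.

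For a fixed pair, $X_i$ and $X_j$ are independent, so their joint law is the product measure $\nu\otimes\nu$ on $\mathbb R^2$. The diagonal $D=\{(x,y):x=y\}$ is closed, hence Borel. By Fubini--Tonelli,
\[
    \mathbb P(X_i=X_j)=(\nu\otimes\nu)(D)=\int_{\mathbb R}\nu(\{x\})\,\nu(dx)=\int_{\mathbb R}0\,\nu(dx)=0,
\]
where the inner integral is zero precisely because $\nu$ has no atoms. Summing over the finitely many pairs gives the claim.

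There is no substantial obstacle. The one point to state carefully is the justification for Fubini: the indicator of $D$ must be jointly measurable, which holds because $D$ is closed, and the $X_m$ must be genuine random variables, which uses measurability of $c$. A remark that follows directly is that on this probability-one event the minimizer $m^\star$ is almost surely unique, which is how the lemma is used later.
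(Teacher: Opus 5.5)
Your proof is correct and is essentially the paper's argument: the paper also fixes $\sigma$, sets $Y_i=c(\mu+\sigma\epsilon_i)$, uses independence and non-atomicity to get $\mathbb P(Y_i=Y_j)=\int \mathbb P(Y_i=t)\,d\mathbb P_{Y_j}(t)=0$, and finishes with a union bound over the finitely many pairs. Your extra remarks on the measurability of the diagonal and of $c$ are a harmless refinement. Since the lemma assumes only Assumption~\ref{ass:non_degenerate_selection}, the operative justification is your fallback that measurability is implicit there, not continuity of $c$.
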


\begin{proof}
Fix $\sigma\in(0,\bar\sigma)$ and define $Y_i=c(\mu+\sigma\epsilon_i)$.
Then $Y_1,\ldots,Y_M$ are independent and identically distributed.
By Assumption~\ref{ass:non_degenerate_selection}, each $Y_i$ has no atoms.
For any $i\neq j$,
\[
    \mathbb P(Y_i=Y_j)
    =
    \int_{\mathbb R}
    \mathbb P(Y_i=t)\,d\mathbb P_{Y_j}(t)
    =
    0.
\]
Taking a union bound over the finitely many pairs $(i,j)$ gives
\[
    \mathbb P(\exists i\neq j:Y_i=Y_j)=0.
\]
\end{proof}

\begin{lemma}[Differentiability of the scale objective]
\label{lem:differentiability_scale_objective}
Assume Assumptions~\ref{ass:smoothness} and~\ref{ass:non_degenerate_selection}.
For every $\sigma\in(0,\bar\sigma)$, $J^M(\mu,\sigma)$ is differentiable with respect to $\sigma$ and
\[
    \partial_\sigma J^M(\mu,\sigma)
    =
    \mathbb E\left[
        -\nabla c(A_{m^\star})^\top\epsilon_{m^\star}
    \right],
\]
where $m^\star\in\arg\min_{m=1,\ldots,M}c(A_m)$.
\end{lemma}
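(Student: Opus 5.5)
We differentiate under the expectation sign, using that $\sigma\mapsto \min_m c(\mu+\sigma\epsilon_m)$ is Lipschitz and almost surely differentiable. Set $F(\sigma;\epsilon_{1:M}) := \min_{m} c(\mu+\sigma\epsilon_m)$, so that $J^M(\mu,\sigma) = -\mathbb E[F(\sigma;\epsilon_{1:M})]$. Each map $f_m(\sigma)=c(\mu+\sigma\epsilon_m)$ is $C^1$ with $f_m'(\sigma)=\nabla c(\mu+\sigma\epsilon_m)^\top\epsilon_m$.

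\textbf{Step 1: derivative of the minimum.} Fix $\sigma\in(0,\bar\sigma)$. By Lemma~\ref{lem:no_ties_from_non_atomicity}, almost surely the minimizer $m^\star$ is unique, so $f_{m^\star}(\sigma)<f_m(\sigma)$ for all $m\neq m^\star$. By continuity this strict ordering persists on a neighbourhood of $\sigma$. Hence $F=f_{m^\star}$ near $\sigma$, and $\partial_\sigma F(\sigma)=\nabla c(A_{m^\star})^\top\epsilon_{m^\star}$ almost surely.

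\textbf{Step 2: domination.} For $|h|\le 1$ with $\sigma+h>0$, we use that the minimum of functions is Lipschitz with constant equal to the largest of their Lipschitz constants. This gives
\[
|F(\sigma+h)-F(\sigma)|\le |h|\max_m \sup_{|t|\le1}\|\nabla c(\mu+(\sigma+t)\epsilon_m)\|\,\|\epsilon_m\|.
\]
By $L$-smoothness (Assumption~\ref{ass:smoothness}),
\[
\|\nabla c(\mu+s\epsilon_m)\|\le \|\nabla c(\mu)\|+L s\|\epsilon_m\|.
\]
The difference quotients are therefore dominated by
\[
G:=\sum_m\bigl(\|\nabla c(\mu)\|\|\epsilon_m\|+L(\sigma+1)\|\epsilon_m\|^2\bigr),
\]
which is integrable under the Gaussian law. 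Integrability of $F$ itself follows from the quadratic growth bound $|c(a)|\le |c(\mu)|+\|\nabla c(\mu)\|\|a-\mu\|+\tfrac L2\|a-\mu\|^2$.

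\textbf{Step 3: conclude.} Dominated convergence applied to the difference quotients, which converge almost surely by Step 1, gives differentiability of $J^M$ in $\sigma$ and the formula
\[
\partial_\sigma J^M(\mu,\sigma)=-\mathbb E[\nabla c(A_{m^\star})^\top\epsilon_{m^\star}].
\]
Alternatively, one can write $F(\sigma+h)-F(\sigma)=\int_\sigma^{\sigma+h}F'(t)\,dt$ using absolute continuity and apply Fubini.

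The main subtlety is Step 1. The uniqueness statement in Lemma~\ref{lem:no_ties_from_non_atomicity} holds at the fixed $\sigma$ only, but that is all dominated convergence needs: pointwise almost-sure differentiability at $\sigma$ together with a uniform Lipschitz dominator. The fact that ties may occur at other values of $\sigma$ therefore causes no issue.
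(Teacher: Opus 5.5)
Your proof is correct and follows the same route as the paper. Both arguments use almost-sure uniqueness of the minimizer at the fixed $\sigma$ (Lemma~\ref{lem:no_ties_from_non_atomicity}) for pathwise differentiability, then the $L$-smoothness bound $\|\nabla c(\mu+s\epsilon_m)\|\le\|\nabla c(\mu)\|+L|s|\|\epsilon_m\|$ to get an integrable dominator for dominated convergence. The differences are minor: you replace the paper's appeal to Danskin's theorem with the direct strict-ordering argument, and you dominate the difference quotients through the Lipschitz property of the minimum rather than bounding $|F'(s)|$ on a neighbourhood, which makes the interchange step (and the integrability of $J^M$ itself) slightly more explicit.
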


\begin{proof}
Fix $\sigma\in(0,\bar\sigma)$.
For a fixed realization of $\epsilon_{1:M}$, define
\[
    F(s)
    =
    \max_{m=1,\ldots,M}
    -c(\mu+s\epsilon_m).
\]
By Lemma~\ref{lem:no_ties_from_non_atomicity}, the maximizer is almost surely unique at $s=\sigma$.
Hence, by Danskin's theorem,
\[
    F'(\sigma)
    =
    -\nabla c(\mu+\sigma\epsilon_{m^\star})^\top\epsilon_{m^\star}
\]
for almost every realization.

It remains to justify interchanging differentiation and expectation.
Choose $\delta>0$ such that $[\sigma-\delta,\sigma+\delta]\subset(0,\bar\sigma)$.
For every $s\in[\sigma-\delta,\sigma+\delta]$ and every $m$, Assumption~\ref{ass:smoothness} gives
\[
    \|\nabla c(\mu+s\epsilon_m)\|
    \le
    \|\nabla c(\mu)\|
    +
    L(\sigma+\delta)\|\epsilon_m\|.
\]
Therefore,
\[
    |F'(s)|
    \le
    \left(
        \|\nabla c(\mu)\|
        +
        L(\sigma+\delta)\max_m\|\epsilon_m\|
    \right)
    \max_m\|\epsilon_m\|.
\]
The right-hand side is integrable because $M<\infty$ and Gaussian random variables have finite moments.
Dominated convergence yields
\[
    \partial_\sigma J^M(\mu,\sigma)
    =
    \mathbb E[F'(\sigma)]
    =
    \mathbb E\left[
        -\nabla c(A_{m^\star})^\top\epsilon_{m^\star}
    \right].
\]
\end{proof}

\subsection{Proof of Proposition~\ref{prop:entropy_increase}}
\label{sec:app:proof:entropy_increase}

\begin{proof}
Let $g=\nabla c(\mu)$.
By assumption, $g\neq0$.
Define $B_m=g^\top\epsilon_m$.
Then $B_m\stackrel{d}{=}\|g\|Z_m$, where $Z_m\sim\mathcal N(0,1)$.
Since $g\neq0$, the random variables $B_1,\ldots,B_M$ are continuous.
Thus,
\[
    j^\star\in\arg\min_{m=1,\ldots,M}B_m
\]
is almost surely unique.

We first show that, for almost every realization, $m^\star=j^\star$ for all sufficiently small $\sigma>0$.
By Assumption~\ref{ass:smoothness},
\[
    c(\mu+\sigma\epsilon_m)
    =
    c(\mu)
    +
    \sigma g^\top\epsilon_m
    +
    R_m(\sigma),
\]
where
\[
    |R_m(\sigma)|
    \le
    \frac{L}{2}\sigma^2\|\epsilon_m\|^2.
\]
Fix a realization for which $j^\star$ is unique, and define
\[
    \Delta
    =
    \min_{m\neq j^\star}
    \left(
        B_m-B_{j^\star}
    \right)
    >
    0.
\]
For any $m\neq j^\star$,
\[
    c(\mu+\sigma\epsilon_m)
    -
    c(\mu+\sigma\epsilon_{j^\star})
    =
    \sigma(B_m-B_{j^\star})
    +
    R_m(\sigma)-R_{j^\star}(\sigma).
\]
Since
\[
    |R_m(\sigma)-R_{j^\star}(\sigma)|
    \le
    \frac{L}{2}\sigma^2
    \left(
        \|\epsilon_m\|^2+\|\epsilon_{j^\star}\|^2
    \right),
\]
there exists $\sigma_\epsilon>0$ such that, for all $\sigma\in(0,\sigma_\epsilon)$,
\[
    |R_m(\sigma)-R_{j^\star}(\sigma)|
    \le
    \frac{\sigma\Delta}{2}
    \qquad
    \text{for all }m\neq j^\star.
\]
Thus,
\[
    c(\mu+\sigma\epsilon_m)
    -
    c(\mu+\sigma\epsilon_{j^\star})
    \ge
    \frac{\sigma\Delta}{2}
    >
    0
\]
for all $m\neq j^\star$.
Hence, $m^\star=j^\star$ for all sufficiently small $\sigma>0$, almost surely.

By Lemma~\ref{lem:differentiability_scale_objective},
\[
    \partial_\sigma J^M(\mu,\sigma)
    =
    \mathbb E\left[
        -\nabla c(\mu+\sigma\epsilon_{m^\star})^\top
        \epsilon_{m^\star}
    \right].
\]
Using the eventual equality $m^\star=j^\star$ and the continuity of $\nabla c$,
\[
    -\nabla c(\mu+\sigma\epsilon_{m^\star})^\top
    \epsilon_{m^\star}
    \to
    -g^\top\epsilon_{j^\star}
    =
    -\min_{m=1,\ldots,M}g^\top\epsilon_m
\]
almost surely as $\sigma\downarrow0$.

We now justify dominated convergence.
For $0<\sigma\le1$, Assumption~\ref{ass:smoothness} gives
\[
    \|\nabla c(\mu+\sigma\epsilon_{m^\star})\|
    \le
    \|g\|
    +
    L\max_m\|\epsilon_m\|.
\]
Therefore,
\[
    \left|
    \nabla c(\mu+\sigma\epsilon_{m^\star})^\top
    \epsilon_{m^\star}
    \right|
    \le
    \left(
        \|g\|
        +
        L\max_m\|\epsilon_m\|
    \right)
    \max_m\|\epsilon_m\|.
\]
The right-hand side is integrable.
By dominated convergence,
\[
    \lim_{\sigma\downarrow0}
    \partial_\sigma J^M(\mu,\sigma)
    =
    -
    \mathbb E\left[
        \min_{m=1,\ldots,M}g^\top\epsilon_m
    \right].
\]
Since $g^\top\epsilon_m\stackrel{d}{=}\|g\|Z_m$,
\[
    -
    \mathbb E\left[
        \min_m g^\top\epsilon_m
    \right]
    =
    \|g\|
    \mathbb E\left[
        \max_m Z_m
    \right]
    =
    \|\nabla c(\mu)\|\beta_M,
\]
where $\beta_M=\mathbb E[\max_{m=1,\ldots,M}Z_m]$.
For $M\ge2$, $\beta_M>0$ because
\[
    \beta_M
    \ge
    \mathbb E[\max\{Z_1,Z_2\}]
    =
    \frac{1}{2}\mathbb E|Z_1-Z_2|
    >
    0.
\]
Therefore,
\[
    \lim_{\sigma\downarrow0}
    \partial_\sigma J^M(\mu,\sigma)
    =
    \|\nabla c(\mu)\|\beta_M
    >
    0.
\]
By the definition of a one-sided limit, there exists $\sigma_0>0$ such that
\[
    \partial_\sigma J^M(\mu,\sigma)>0
    \qquad
    \text{for all }\sigma\in(0,\sigma_0).
\]
\end{proof}

\subsection{Proof of Proposition~\ref{prop:variance_decrease}}
\label{sec:app:proof:variance_decrease}

\begin{proof}
For $M=1$,
\[
    J^1(\mu,\sigma)
    =
    -\mathbb E[c(\mu+\sigma\epsilon)].
\]
By Assumptions~\ref{ass:smoothness} and~\ref{ass:second_order_regularity}, differentiation under the expectation is justified by dominated convergence, and
\[
    \partial_\sigma J^1(\mu,\sigma)
    =
    -
    \mathbb E[
        \nabla c(\mu+\sigma\epsilon)^\top\epsilon
    ].
\]
Let $A=\mu+\sigma\epsilon$.
By Gaussian integration by parts, for each coordinate $i$,
\[
    \mathbb E[
        \partial_i c(A)\epsilon_i
    ]
    =
    \sigma
    \mathbb E[
        \partial_{ii}^2 c(A)
    ].
\]
Summing over $i=1,\ldots,d$ yields
\[
    \mathbb E[
        \nabla c(A)^\top\epsilon
    ]
    =
    \sigma
    \mathbb E[
        \mathrm{tr}(\nabla^2 c(A))
    ].
\]
By Assumptions~\ref{ass:strong_convexity} and~\ref{ass:second_order_regularity},
\[
    \nabla^2 c(A)
    \succeq
    \lambda I_d.
\]
Therefore,
\[
    \mathrm{tr}(\nabla^2 c(A))
    \ge
    \lambda d.
\]
Hence,
\[
    \partial_\sigma J^1(\mu,\sigma)
    =
    -\sigma
    \mathbb E[
        \mathrm{tr}(\nabla^2 c(A))
    ]
    \le
    -\lambda d\sigma
    <
    0.
\]
\end{proof}

\begin{lemma}[No ties under strong convexity]
\label{lem:no_ties_strong_convex}
Assume Assumptions~\ref{ass:smoothness}, \ref{ass:centered_optimum}, and~\ref{ass:strong_convexity}.
Let $X_1,\ldots,X_M$ be independent random variables on $\mathbb R^d$ with densities with respect to Lebesgue measure.
Then
\[
    \mathbb P\left(
        \exists i\neq j
        \text{ such that }
        c(X_i)=c(X_j)
    \right)
    =
    0.
\]
\end{lemma}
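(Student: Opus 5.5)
The plan is to show that, for each level $t$, the level set $\{a: c(a)=t\}$ has Lebesgue measure zero. Granting this, the rest is a Fubini argument. For fixed $i\neq j$, independence gives
\[
    \mathbb P\bigl(c(X_i)=c(X_j)\bigr)
    =
    \int_{\mathbb R^d}
    \mathbb P\bigl(c(X_i)=c(x)\bigr)\,p_{X_j}(x)\,dx .
\]
The inner probability is the $X_i$-measure of the level set $\{c=c(x)\}$, which is zero because $X_i$ has a density and the set is Lebesgue-null. A union bound over the finitely many pairs $(i,j)$ then finishes the argument, exactly as in Lemma~\ref{lem:no_ties_from_non_atomicity}.

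The main step is the level-set claim, handled in three cases.
\begin{itemize}
\item For $t<0$: by Assumptions~\ref{ass:centered_optimum} and~\ref{ass:strong_convexity}, $c(a)\ge \frac{\lambda}{2}\|a\|^2\ge 0$. The level set is therefore empty.
\item For $t=0$: the same bound shows the level set is $\{0\}$, which is null.
\item For $t>0$: strong convexity applied at $a$ and at $0$ gives
\[
    \nabla c(a)^\top a \;\ge\; c(a) + \frac{\lambda}{2}\|a\|^2 \;>\; 0
    \qquad \text{for } a\neq 0 .
\]
So $\nabla c\neq 0$ on $\{c=t\}$.
\end{itemize}

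In the case $t>0$, $c$ is only $C^1$ (Assumption~\ref{ass:smoothness} gives a Lipschitz gradient). The implicit function theorem still applies: near each point of the level set, the set is the graph of a $C^1$ function of $d-1$ variables over a coordinate hyperplane, and such a graph is Lebesgue-null. The level set is closed, hence $\sigma$-compact, so it is covered by countably many such neighbourhoods and is null.

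An alternative that avoids the implicit function theorem is to use radial structure. By the inequality above, $s\mapsto c(s u)$ is strictly increasing on $s>0$ for each unit vector $u$. Each ray therefore meets $\{c=t\}$ in at most one point, and integrating in polar coordinates gives measure zero.

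The main obstacle is this measure-zero claim for $t>0$. The radial route is the cleanest, requiring only the monotonicity of $s\mapsto c(su)$, with derivative $\nabla c(su)^\top u>0$, and polar-coordinate Fubini.

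The case $d=1$ is covered by the same reasoning, since the level sets are then at most two points.
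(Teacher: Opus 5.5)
Your proof is correct. Its outer structure matches the paper's: the three cases $t<0$, $t=0$, $t>0$ for the level sets $\{c=t\}$, then non-atomicity of $c(X_i)$, then a Fubini plus union-bound step over pairs. The genuine difference is in the case $t>0$.

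The paper argues convex-geometrically. The sublevel set $C_t=\{c\le t\}$ is convex, and the midpoint form of strong convexity shows that no point with $c(x)=t$ lies in its interior. Hence $\{c=t\}\subseteq\partial C_t$, and the paper then invokes the standard fact that the boundary of a convex set is Lebesgue-null.

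You instead derive the first-order inequality $\nabla c(a)^\top a\ge c(a)+\frac{\lambda}{2}\|a\|^2>0$ for $a\neq0$. From it you conclude that $s\mapsto c(su)$ is strictly increasing along every ray from the minimizer, so each ray meets the level set at most once, and you finish with polar coordinates. In effect you prove the cited convex-boundary fact from scratch in this special case, with the minimizer $0$ serving as the interior point from which rays are cast.

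This makes your argument fully self-contained, at the cost of using gradients and the location of the minimizer. The paper's step is gradient-free but leans on an external lemma. Your implicit-function-theorem variant is more general in another direction: it needs only $c\in C^1$ with $\nabla c\neq0$ on the level set, and uses strong convexity solely to certify that the gradient does not vanish there.
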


\begin{proof}
We first show that each level set of $c$ has Lebesgue measure zero.
By Assumptions~\ref{ass:centered_optimum} and~\ref{ass:strong_convexity}, $0$ is the unique minimizer of $c$, and $c(0)=0$.

Fix $t\in\mathbb R$.
If $t<0$, then $\{x:c(x)=t\}$ is empty.
If $t=0$, then $\{x:c(x)=t\}=\{0\}$, which has Lebesgue measure zero.
Now suppose $t>0$ and define
\[
    C_t=\{x\in\mathbb R^d:c(x)\le t\}.
\]
The set $C_t$ is convex.
It also has nonempty interior because Assumptions~\ref{ass:smoothness} and~\ref{ass:centered_optimum} imply
$c(x)\le (L/2)\|x\|^2$, so a sufficiently small ball around $0$ is contained in $C_t$.
We claim that
\[
    \{x:c(x)=t\}
    \subseteq
    \partial C_t.
\]
Suppose, for contradiction, that $x\in\{c=t\}$ belongs to the interior of $C_t$.
Then there exists a nonzero vector $u$ such that $x+u\in C_t$ and $x-u\in C_t$.
By strong convexity,
\[
    c(x)
    =
    c\left(\frac{x+u+x-u}{2}\right)
    \le
    \frac{c(x+u)+c(x-u)}{2}
    -
    \frac{\lambda}{2}\|u\|^2
    \le
    t-\frac{\lambda}{2}\|u\|^2
    <
    t,
\]
contradicting $c(x)=t$.
Thus, the level set is contained in the boundary of the convex set $C_t$.
The boundary of a convex set in $\mathbb R^d$ has Lebesgue measure zero.
Hence every level set of $c$ has Lebesgue measure zero.

Since each $X_i$ has a density,
\[
    \mathbb P(c(X_i)=t)
    =
    \mathbb P(X_i\in\{x:c(x)=t\})
    =
    0
\]
for every $t\in\mathbb R$.
Thus $c(X_i)$ has no atoms.
The same argument as in Lemma~\ref{lem:no_ties_from_non_atomicity} gives
\[
    \mathbb P(\exists i\neq j:c(X_i)=c(X_j))=0.
\]
\end{proof}

\subsection{Proof of Proposition~\ref{prop:gradient_damping}}
\label{sec:app:proof:gradient_damping}

\begin{proof}
Assumptions~\ref{ass:centered_optimum} and~\ref{ass:strong_convexity} imply
\[
    c(a)
    \ge
    \frac{\lambda}{2}\|a\|^2
    \qquad
    \text{for all }a\in\mathbb R^d.
\]
Assumptions~\ref{ass:smoothness} and~\ref{ass:centered_optimum} imply
\[
    c(a)
    \le
    \frac{L}{2}\|a\|^2
    \qquad
    \text{for all }a\in\mathbb R^d,
\]
and
\[
    \|\nabla c(a)\|
    =
    \|\nabla c(a)-\nabla c(0)\|
    \le
    L\|a\|.
\]

By Lemma~\ref{lem:no_ties_strong_convex}, the selected index
\[
    m^\star\in\arg\min_{m=1,\ldots,M}c(A_m)
\]
is almost surely unique.
For a fixed realization of $\epsilon_{1:M}$, define
\[
    F(\mu,\sigma)
    =
    \max_{m=1,\ldots,M}
    -c(\mu+\sigma\epsilon_m).
\]
By Danskin's theorem,
\[
    \nabla_\mu F(\mu,\sigma)
    =
    -\nabla c(A_{m^\star}),
\]
and
\[
    \partial_\sigma F(\mu,\sigma)
    =
    -\nabla c(A_{m^\star})^\top\epsilon_{m^\star}
\]
almost surely.
Furthermore,
\[
    \|\nabla c(A_{m^\star})\|
    \le
    L\|A_{m^\star}\|
    \le
    L\left(\|\mu\|+\sigma\max_m\|\epsilon_m\|\right),
\]
and
\[
    |\nabla c(A_{m^\star})^\top\epsilon_{m^\star}|
    \le
    L\|A_{m^\star}\|\|\epsilon_{m^\star}\|
    \le
    L\left(\|\mu\|+\sigma\max_m\|\epsilon_m\|\right)
    \max_m\|\epsilon_m\|.
\]
Both right-hand sides are integrable because $M<\infty$ and Gaussian moments are finite.
Thus, differentiation can be interchanged with expectation, yielding
\[
    \nabla_\mu J^M(\mu,\sigma)
    =
    \mathbb E[-\nabla c(A_{m^\star})],
\]
and
\[
    \partial_\sigma J^M(\mu,\sigma)
    =
    \mathbb E[-\nabla c(A_{m^\star})^\top\epsilon_{m^\star}].
\]

Using $\|\nabla c(a)\|\le L\|a\|$,
\[
    \|\nabla_\mu J^M(\mu,\sigma)\|
    \le
    \mathbb E[\|\nabla c(A_{m^\star})\|]
    \le
    L\mathbb E[\|A_{m^\star}\|].
\]
Similarly,
\[
    |\partial_\sigma J^M(\mu,\sigma)|
    \le
    \mathbb E[
        \|\nabla c(A_{m^\star})\|
        \|\epsilon_{m^\star}\|
    ]
    \le
    L\mathbb E[
        \|A_{m^\star}\|
        \|\epsilon_{m^\star}\|
    ].
\]

Since $m^\star$ minimizes $c(A_m)$,
\[
    c(A_{m^\star}) = \min_{m=1,\ldots,M} c(A_m).
\]
Using the quadratic bounds on $c$,
\[
    \frac{\lambda}{2}\|A_{m^\star}\|^2
    \le
        c(A_{m^\star})
    =
    \min_m c(A_m)
    \le
    \frac{L}{2}\min_m\|A_m\|^2.
\]
Therefore,
\[
    \|A_{m^\star}\|
    \le
    \sqrt{\frac{L}{\lambda}}
    \min_{m=1,\ldots,M}\|A_m\|.
\]
Substituting this bound gives
\[
    \|\nabla_\mu J^M(\mu,\sigma)\|
    \le
    L\sqrt{\frac{L}{\lambda}}\,
    \mathbb E\left[
        \min_m\|A_m\|
    \right],
\]
and
\[
    |\partial_\sigma J^M(\mu,\sigma)|
    \le
    L\sqrt{\frac{L}{\lambda}}\,
    \mathbb E\left[
        \min_m\|A_m\|
        \|\epsilon_{m^\star}\|
    \right].
\]
\end{proof}
\section{Details of the gradient analysis}\label{sec:app:gradient}
We provide additional details for the vector field analysis in Sec.~\ref{sec:grad_analysis:vector_field} and the effect of Adam's normalization in Sec.~\ref{sec:grad_analysis:adam_effect}.

\subsection{Details for the vector field analysis}\label{sec:app:gradient:vector_field}
In Sec.~\ref{sec:grad_analysis:vector_field}, we visualized the gradient landscape of the ReMax objective to build intuition for its exploration-encouraging properties.
Here we describe the setup and methodology used to generate these vector fields.

We consider a stateless continuous-control problem in which the policy is a one-dimensional Gaussian distribution $\mathcal{N}(\mu,\sigma^2)$.
To enable differentiation through sampling, actions are parameterized using the reparameterization trick, $a_i=\mu+\sigma\epsilon_i$, where $\epsilon_i\sim\mathcal{N}(0,1)$ are \iid\ standard normal noise variables.
We use the quadratic deterministic reward $r(a)=-a^2$, making $a=0$ the unique optimal action.
For entropy regularization, we set $\alpha=0.5$ and use $r(a)=-a^2+\alpha\frac12\log(2\pi\sigma^2)$.

To construct the vector field, we evaluate gradients of the ReMax objective with respect to $\mu$ and $\sigma$ on a grid defined by $\mu\in[-2,2]$ and $\sigma\in[0.1,2]$.
For each coordinate $(\mu,\sigma)$, we sample a batch of $B=16$ actions and compute gradients using the reparameterization (RP) estimator with automatic differentiation.
A single stochastic gradient evaluation would produce a noisy vector field.
To approximate the expected gradient landscape $\mathbb{E}[\nabla_{\mu,\sigma}\mathcal{J}^M]$, we therefore average RP gradients over 100 independent Monte Carlo trials for each grid coordinate.

\subsection{Details for Adam's normalization}\label{sec:app:gradient:adam_effect}
In Sec.~\ref{sec:grad_analysis:adam_effect}, we showed that Adam's normalization can mitigate the damping effect of ReMax.
To illustrate this, we plot the convergence path of $(\mu,\sigma)$ in the two-dimensional parameter space.
As $M$ increases, the trajectory becomes more gradual around the optimum, consistent with the vector-field analysis.

Furthermore, the consistency with SGD observed for larger $\epsilon$ in Fig.~\ref{fig:adam_effect} also appears in these trajectories, supporting the argument that increasing $\epsilon$ makes the optimization path follow the vector field more closely.
\begin{figure}[t]
    \centering
    \includegraphics[width=1.0\textwidth]{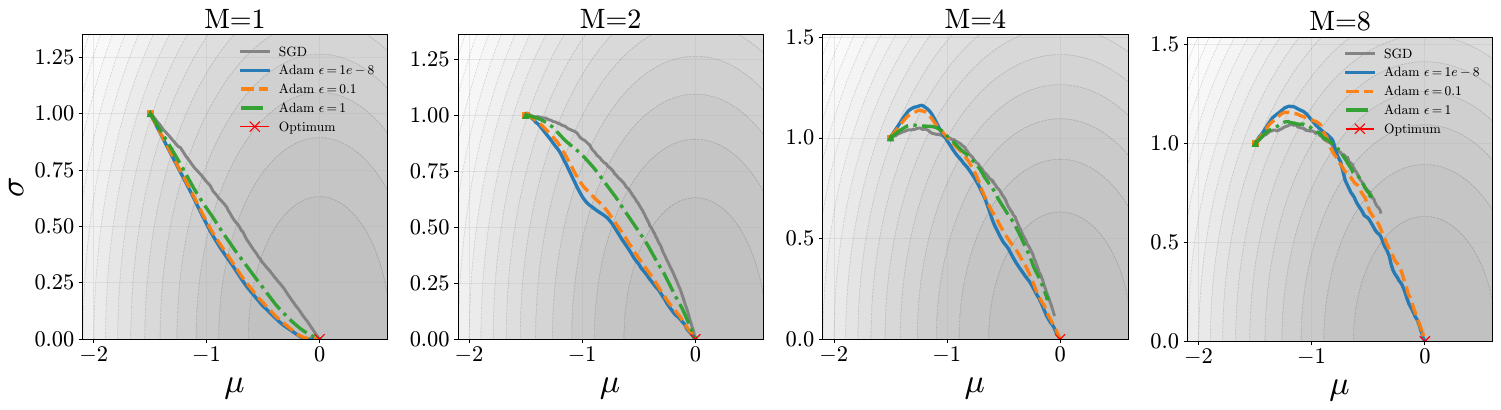}
    \caption{
        The two-dimensional trajectories of the parameter $(\mu, \sigma)$ optimized by Adam starting from $(\mu, \sigma) = (-1.5, 1.0)$ with different $\epsilon$ values ($\epsilon \in \{10^{-8}, 10^{-1}, 1.0\}$) and by SGD.
        As the vector field indicates, increasing $M$ makes the path deviate more from the optimum and slows convergence.
        However, with smaller $\epsilon$, convergence near the optimum remains fast, whereas larger $\epsilon$ aligns the trajectory with the SGD path that follows the vector field.
    }
    \label{fig:adam_path}
\end{figure}

\section{Details of the experiments}\label{sec:app:experiments}
Here we provide additional details for the experiments in Sec.~\ref{sec:experiments}.

\subsection{Setup}\label{sec:app:experiments:setup}

\paragraph{Implementation.}
The implementation of the ReMAC algorithm is based on the \texttt{rejax}\footnote{\url{https://github.com/keraJLi/rejax}} implementation of SAC.
We include the code in the supplementary material.

\paragraph{Hyperparameters.}
We follow the SAC and PPO implementations and hyperparameters provided in the \texttt{rejax} library \citep{liesen2024rejax}, which contains environment-specific tuned configurations for Brax tasks.
Listing all hyperparameters would largely duplicate that repository, so we refer readers to the environment-specific configuration files in the supplementary material at \texttt{brax/configs/} for the exact configurations used in our experiments.
The hyperparameters (learning rate, batch size, discount factor, Polyak factor, etc.) were tuned for each environment by the authors of \texttt{rejax}.
We tuned the learning rate for ReMAC by sweeping $\{10^{-4},2\times10^{-4},3\times10^{-4},5\times10^{-4},10^{-3}\}$ with 3 random seeds and selecting a value that performed reasonably well across environments and $M$ for fixed $\epsilon=10^{-8}$.
The selected learning rates are provided in Tab.~\ref{tab:hyperparameters}.
The other hyperparameters are the same as those used for SAC and PPO in \texttt{rejax}.

\begin{table}[t]
    \centering
    \caption{Selected hyperparameters for ReMAC.}
    \label{tab:hyperparameters}
    \begin{tabular}{c|c}
        \textbf{Environment} & \textbf{Learning Rate}
        \\
        HalfCheetah & $10^{-4}$ \\
        Ant & $2\times10^{-4}$ \\
        Hopper & $3\times10^{-4}$ \\
        Reacher & $5\times10^{-4}$ \\
        Swimmer & $10^{-4}$ \\
        Walker2d & $10^{-4}$ \\
    \end{tabular}
\end{table}

\subsection{Additional results}\label{sec:app:experiments:additional_results}
Here we provide additional results for the experiments in Sec.~\ref{sec:experiments}.

\paragraph{Entropy and return with different $\epsilon$}\label{sec:app:experiments:additional_results:entropy}
We plot the average policy entropy and return with different $\epsilon\in\{10^{-8}, 10^{-2}, 10^{-1}, 1.0\}$ for $M=1, 2, 4, 8$ in Figs.~\ref{fig:entropy_vary_epsilon_m_1}--\ref{fig:return_vary_epsilon_m_8}.
For all $M$, entropy increases as $\epsilon$ increases, consistent with Sec.~\ref{sec:grad_analysis:adam_effect}.
The return tends to decrease for larger $\epsilon$, as expected, because a larger $\epsilon$ reduces the effective step size and can slow learning.

\paragraph{Ablation by the Learning Rate.}
Since Adam's $\epsilon$ rescales the per-parameter step size, a natural question is whether its effect can be reproduced simply by changing the global learning rate $\alpha$.
To test this, we fix $M=4$, sweep $\alpha$ for both the default $\epsilon=10^{-8}$ and the large value $\epsilon=1$, and plot the resulting return and entropy in Figs.~\ref{fig:lr_sweep_return} and~\ref{fig:lr_sweep_entropy}.
We selected three environments (HalfCheetah, Swimmer, and Walker2d) because their default learning rate is $10^{-4}$, which makes it easy to compare the learning-rate effect.
At $\epsilon=1$, where the large denominator strongly damps the effective step size, increasing $\alpha$ from $10^{-4}$ to $10^{-3}$ restores the effective step size and recovers most of the return lost relative to $\epsilon=10^{-8}$, while the entropy decreases as $\alpha$ grows yet stays above the $\epsilon=10^{-8}$ level, especially in HalfCheetah and Swimmer.
At $\epsilon=10^{-8}$, the learning rate tuned for SAC ($\alpha=10^{-4}$) already attains the highest return, and decreasing $\alpha$ from $10^{-4}$ to $10^{-5}$ only slows learning while the entropy still collapses to the low values characteristic of small $\epsilon$.
Hence $\alpha$ and $\epsilon$ are not interchangeable: $\alpha$ scales the whole update uniformly, whereas $\epsilon$ reshapes the per-coordinate normalization and thereby controls the entropy, consistent with the analysis in Sec.~\ref{sec:grad_analysis:adam_effect}.
Furthermore, we found that the best average return is achieved either with $\epsilon=1$ (HalfCheetah) or with $\epsilon=10^{-8}$ (Walker2d and Swimmer), depending on the environment, further suggesting the importance of tuning $\epsilon$ and $\alpha$ jointly to achieve the best performance with ReMAC.

\paragraph{Ablation by the Batch Size.}
The unbiased ReMax estimator draws $B$ action samples per state to estimate the order-$M$ objective and is only defined for $B\ge M$, with larger $B$ reducing the variance of the estimate.
We ablate $B\in\{8,16\}$ at $\epsilon=10^{-8}$ with the tuned learning rate for $M\in\{1,2,4,8\}$ in Fig.~\ref{fig:return_bs_ablation}.
Across all tasks and all $M$, the two settings lie within each other's confidence intervals: $B=16$ is comparable to, and occasionally marginally better than, $B=8$, and the relative ordering across $M$ is unchanged.
Because the estimator becomes noisier as $M$ approaches $B$, we use $B=16$ for the largest setting $M=8$ and $B=8$ otherwise; this ablation confirms that the choice of $B$ does not materially affect the reported results.

\paragraph{Computational cost.}
We report the wall-clock training time of SAC and ReMAC with $B=8$ and $B=16$ on HalfCheetah for 3M environment steps, vectorized over 10 random seeds.
This setting provides a controlled comparison because SAC and ReMAC share the same training pipeline and network architectures and differ mainly in the actor loss, while ReMAC also omits the temperature update used for entropy regularization.
The evaluation frequency is once every 30,000 environment steps, and all experiments are run on a GeForce RTX 2080 Ti.
The results are reported in Tab.~\ref{tab:computational_cost}.
ReMAC is slower than SAC because its actor update requires additional Q-network evaluations for $B$ action samples per state.
As expected, the wall-clock time increases with $B$.
This additional computational cost is a limitation of the current implementation.

\begin{table}[t]
    \centering
    \caption{Wall-clock training time of SAC and ReMAC with $B=8$ and $B=16$ on HalfCheetah for 3M environment steps, vectorized over 10 random seeds.}
    \label{tab:computational_cost}
    \begin{tabular}{c|c|c}
        \textbf{SAC} & \textbf{ReMAC ($B=8$)} & \textbf{ReMAC ($B=16$)}
        \\
        \hline
        1 h 03 min & 1 h 33 min & 2 h 06 min
    \end{tabular}
\end{table}

\newpage
\begin{figure}[t]
    \centering
    \includegraphics[width=1.0\textwidth]{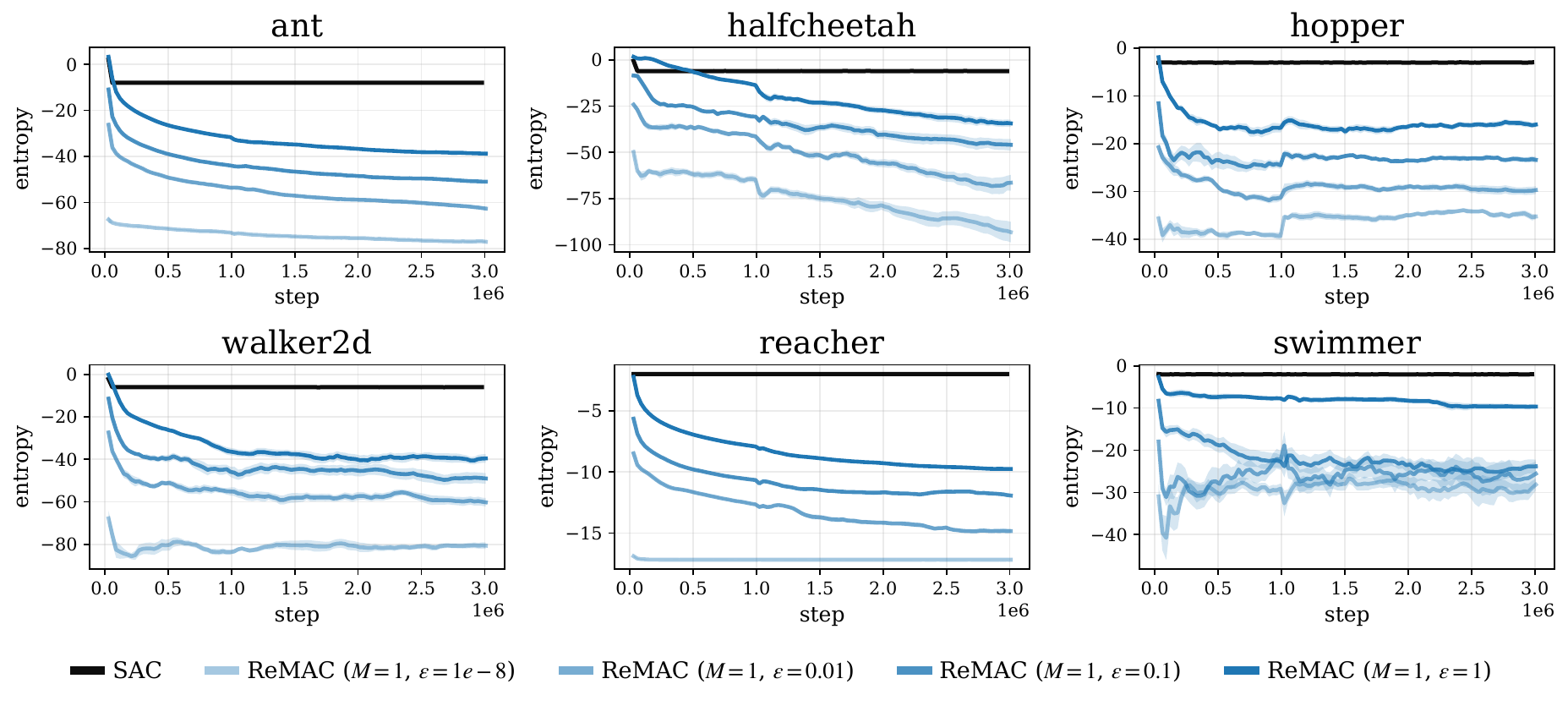}
    \caption{
        The average policy entropy of ReMAC with $M=1$ and different $\epsilon\in\{10^{-8}, 10^{-2}, 10^{-1}, 1.0\}$ using Adam for all tasks.
    }
    \label{fig:entropy_vary_epsilon_m_1}
\end{figure}

\begin{figure}[t]
    \centering
    \includegraphics[width=1.0\textwidth]{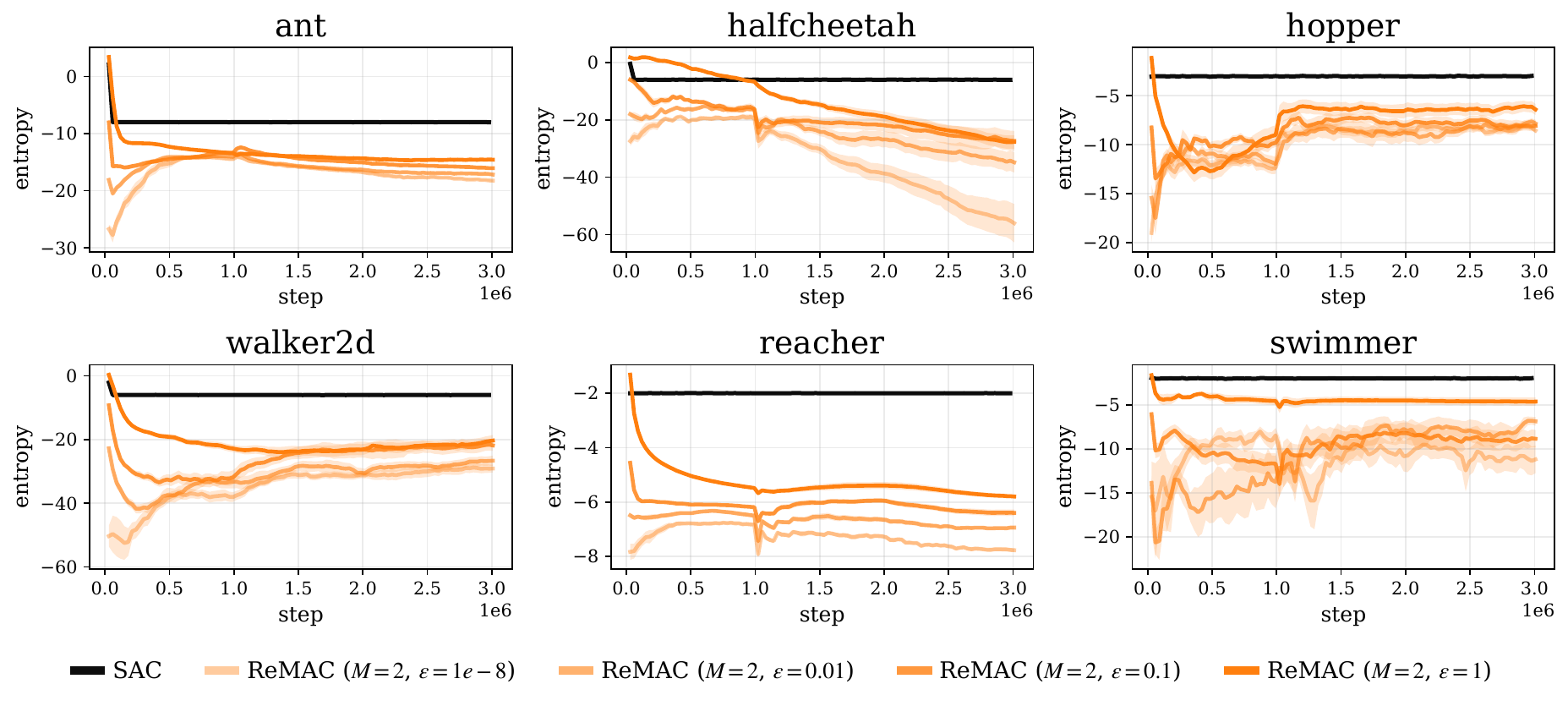}
    \caption{
        The average policy entropy of ReMAC with $M=2$ and different $\epsilon\in\{10^{-8}, 10^{-2}, 10^{-1}, 1.0\}$ using Adam for all tasks.
    }
    \label{fig:entropy_vary_epsilon_m_2}
\end{figure}

\begin{figure}[t]
    \centering
    \includegraphics[width=1.0\textwidth]{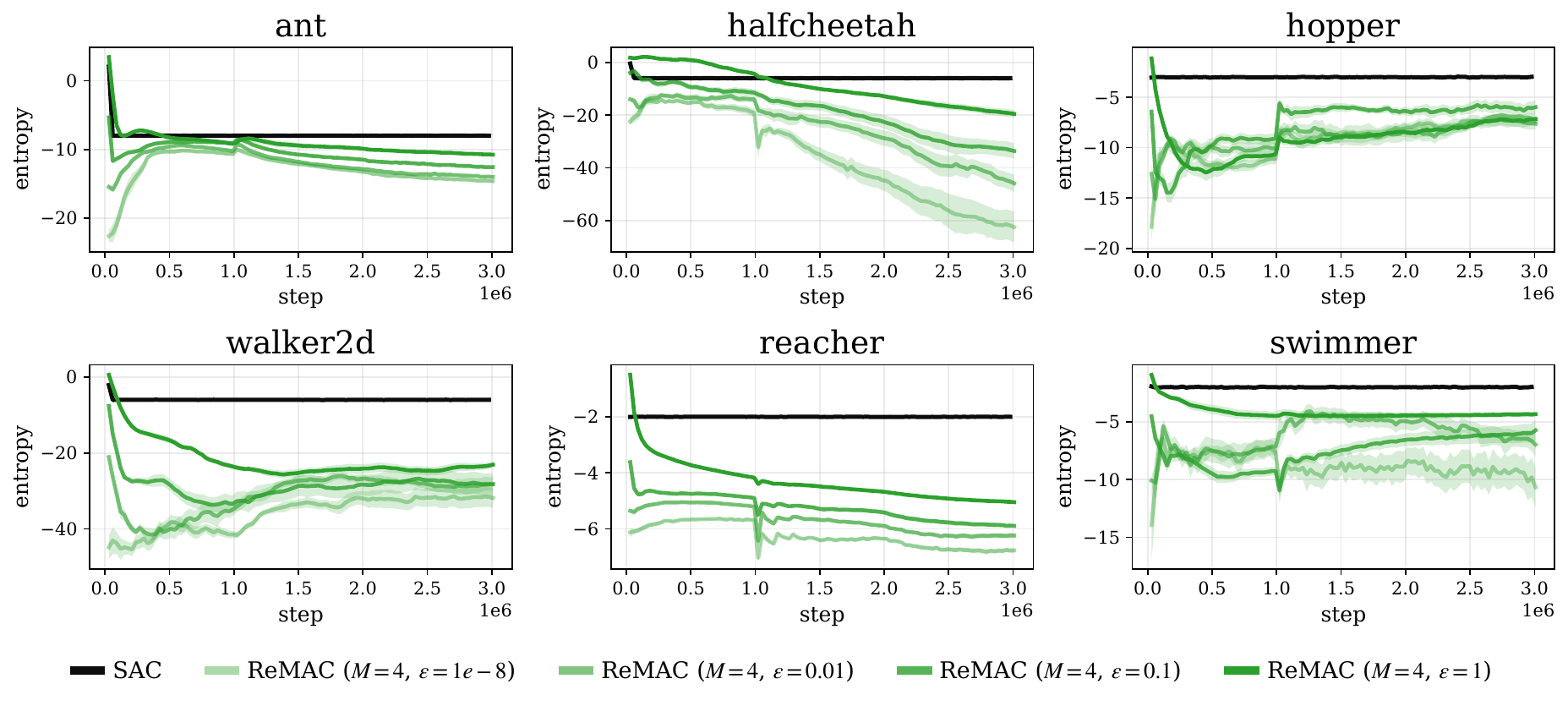}
    \caption{
        The average policy entropy of ReMAC with $M=4$ and different $\epsilon\in\{10^{-8}, 10^{-2}, 10^{-1}, 1.0\}$ using Adam for all tasks.
    }
    \label{fig:entropy_vary_epsilon_m_4}
\end{figure}

\begin{figure}[t]
    \centering
    \includegraphics[width=1.0\textwidth]{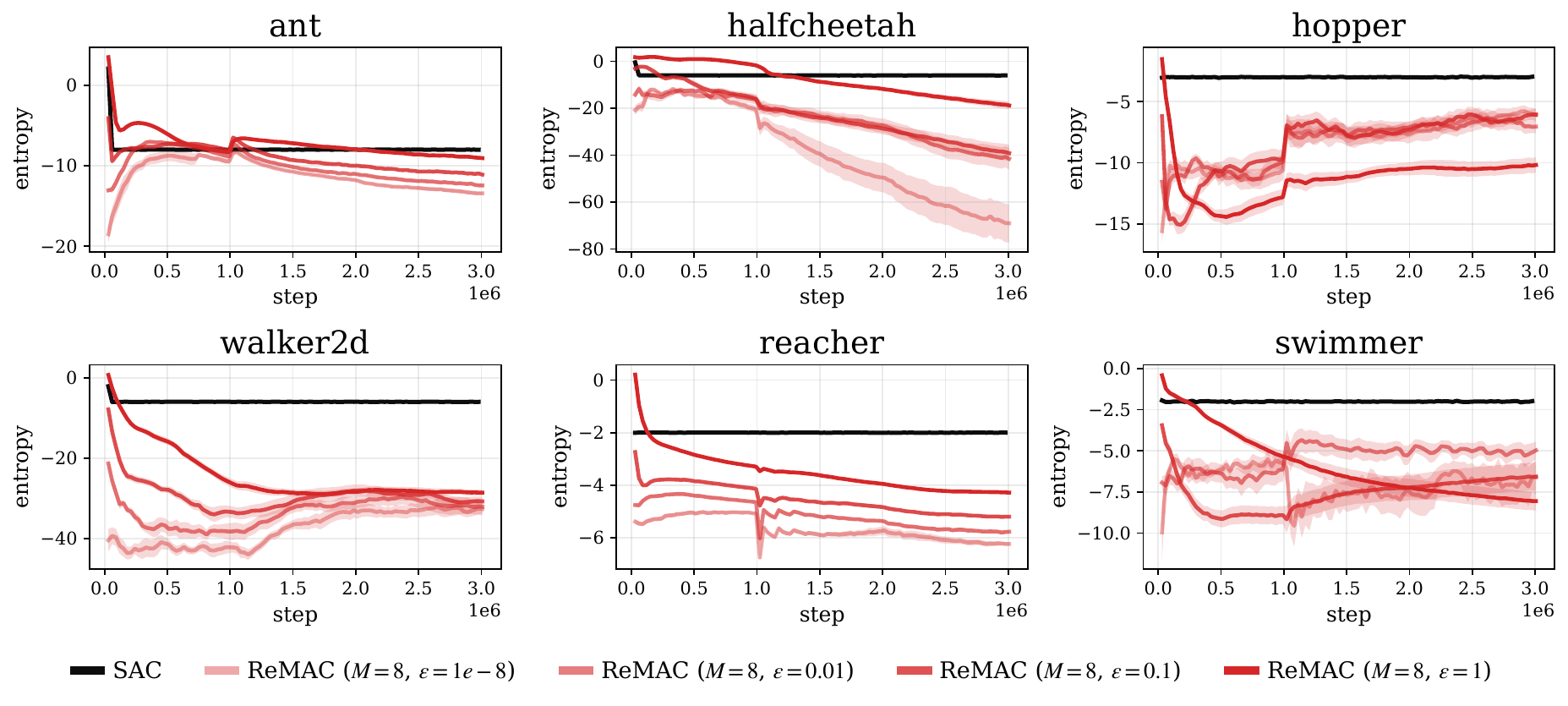}
    \caption{
        The average policy entropy of ReMAC with $M=8$ and different $\epsilon\in\{10^{-8}, 10^{-2}, 10^{-1}, 1.0\}$ using Adam for all tasks.
    }
    \label{fig:entropy_vary_epsilon_m_8}
\end{figure}

\begin{figure}[t]
    \centering
    \includegraphics[width=1.0\textwidth]{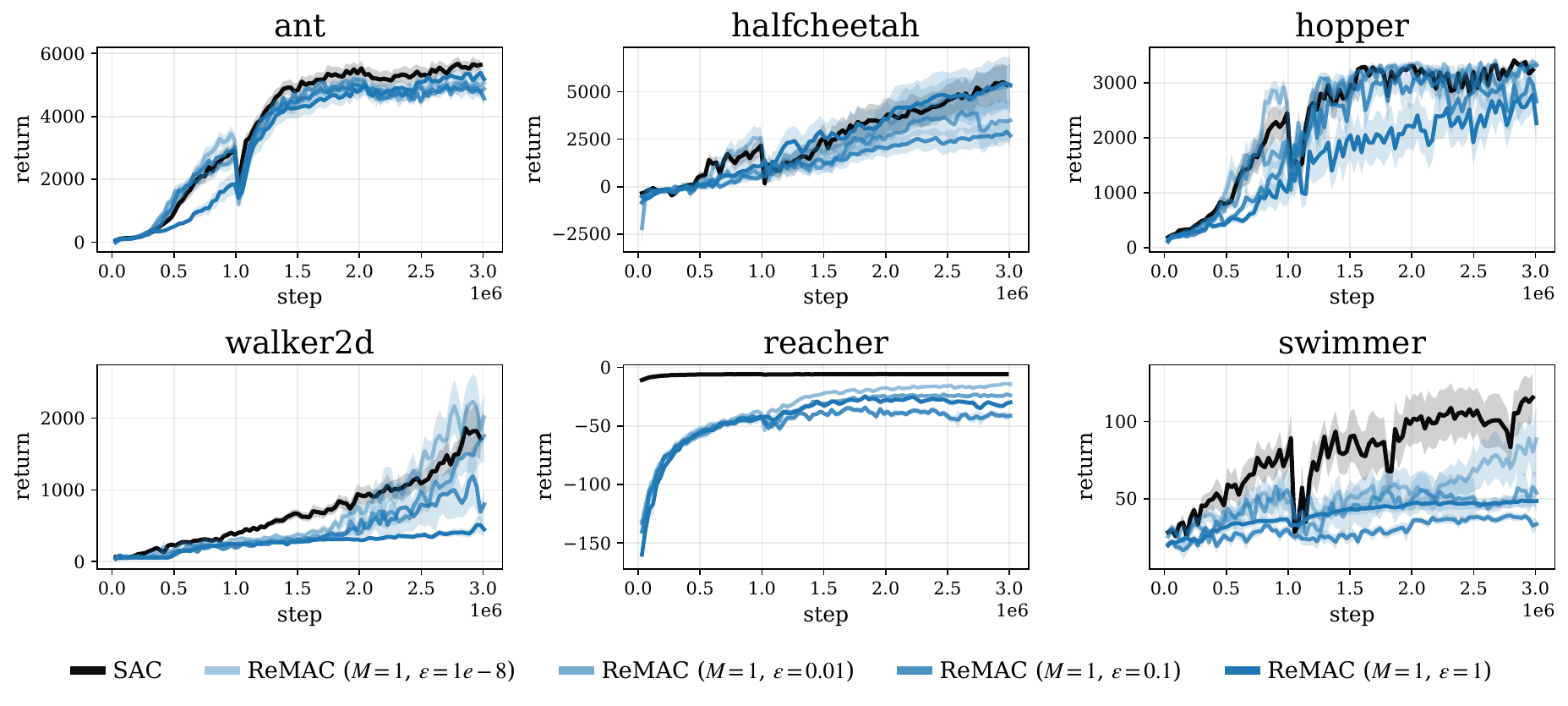}
    \caption{
        The average return of ReMAC with $M=1$ and different $\epsilon\in\{10^{-8}, 10^{-2}, 10^{-1}, 1.0\}$ using Adam for all tasks.
    }
    \label{fig:return_vary_epsilon_m_1}
\end{figure}

\begin{figure}[t]
    \centering
    \includegraphics[width=1.0\textwidth]{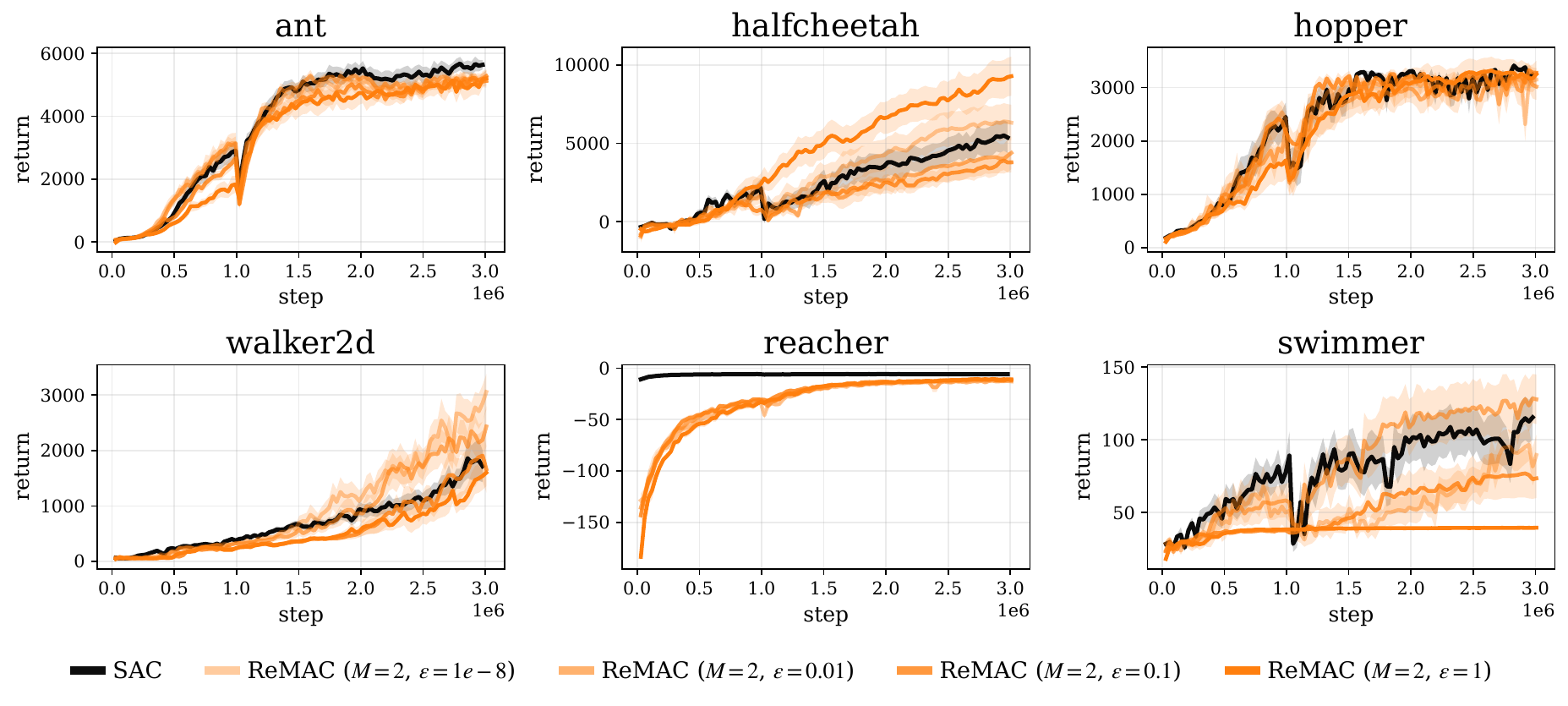}
    \caption{
        The average return of ReMAC with $M=2$ and different $\epsilon\in\{10^{-8}, 10^{-2}, 10^{-1}, 1.0\}$ using Adam for all tasks.
    }
    \label{fig:return_vary_epsilon_m_2}
\end{figure}

\begin{figure}[t]
    \centering
    \includegraphics[width=1.0\textwidth]{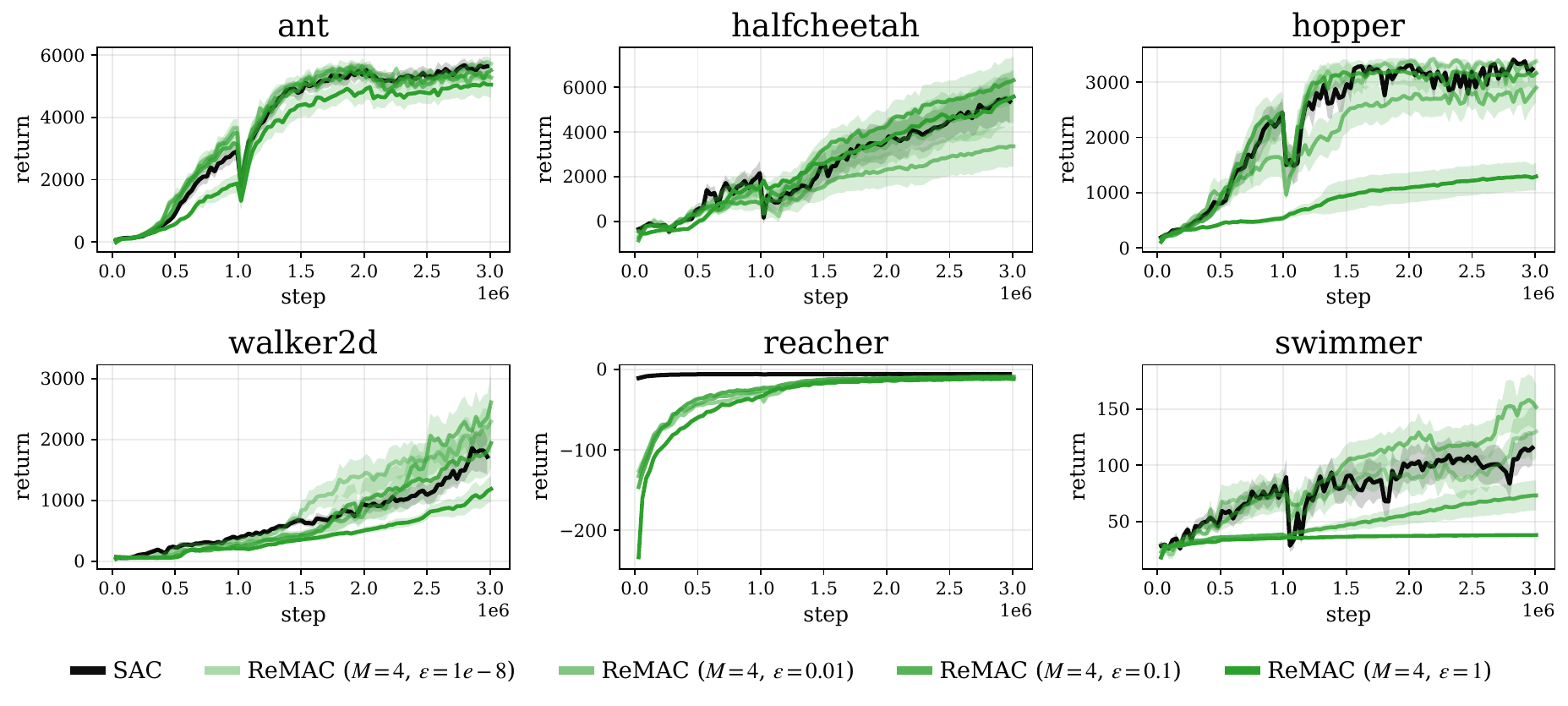}
    \caption{
        The average return of ReMAC with $M=4$ and different $\epsilon\in\{10^{-8}, 10^{-2}, 10^{-1}, 1.0\}$ using Adam for all tasks.
    }
    \label{fig:return_vary_epsilon_m_4}
\end{figure}

\begin{figure}[t]
    \centering
    \includegraphics[width=1.0\textwidth]{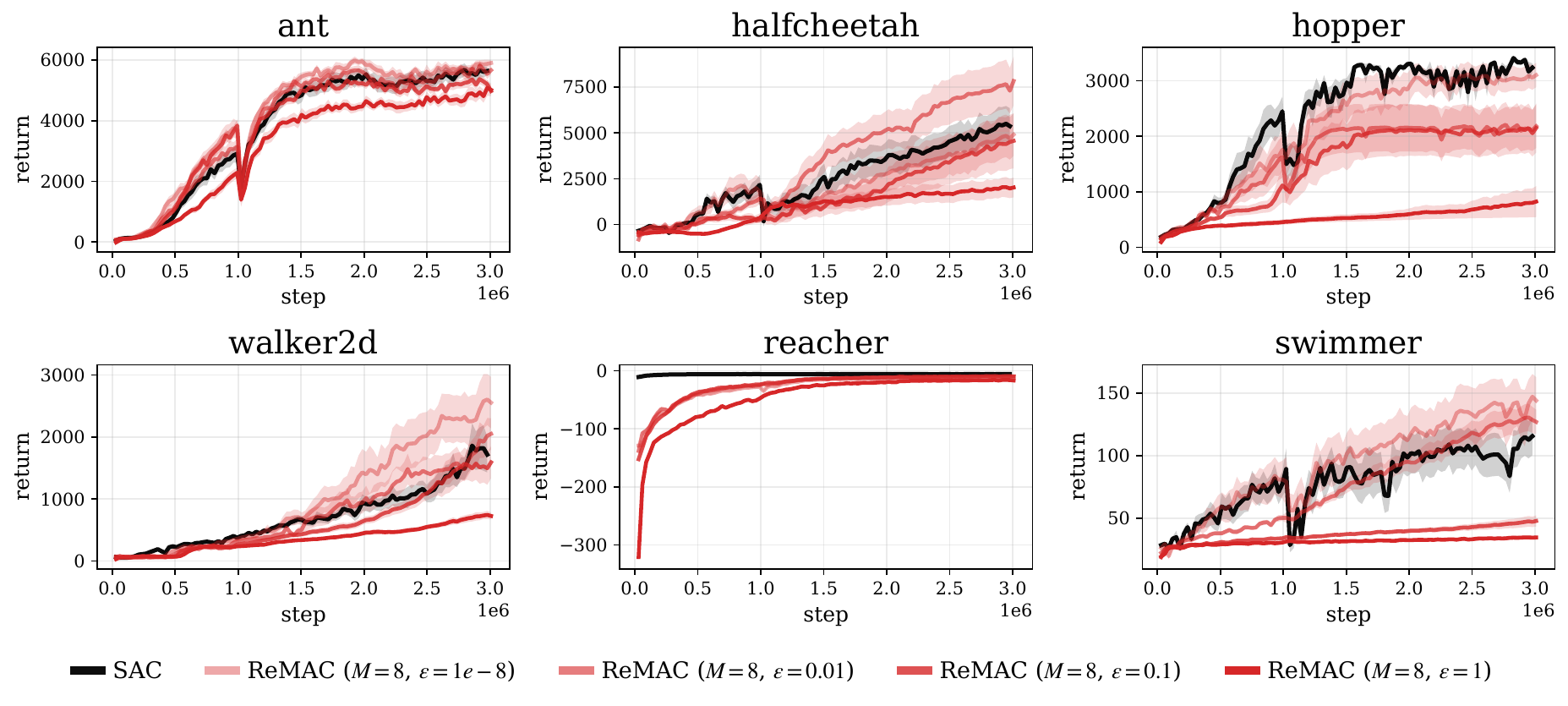}
    \caption{
        The average return of ReMAC with $M=8$ and different $\epsilon\in\{10^{-8}, 10^{-2}, 10^{-1}, 1.0\}$ using Adam for all tasks.
    }
    \label{fig:return_vary_epsilon_m_8}
\end{figure}

\begin{figure}[t]
    \centering
    \includegraphics[width=1.0\textwidth]{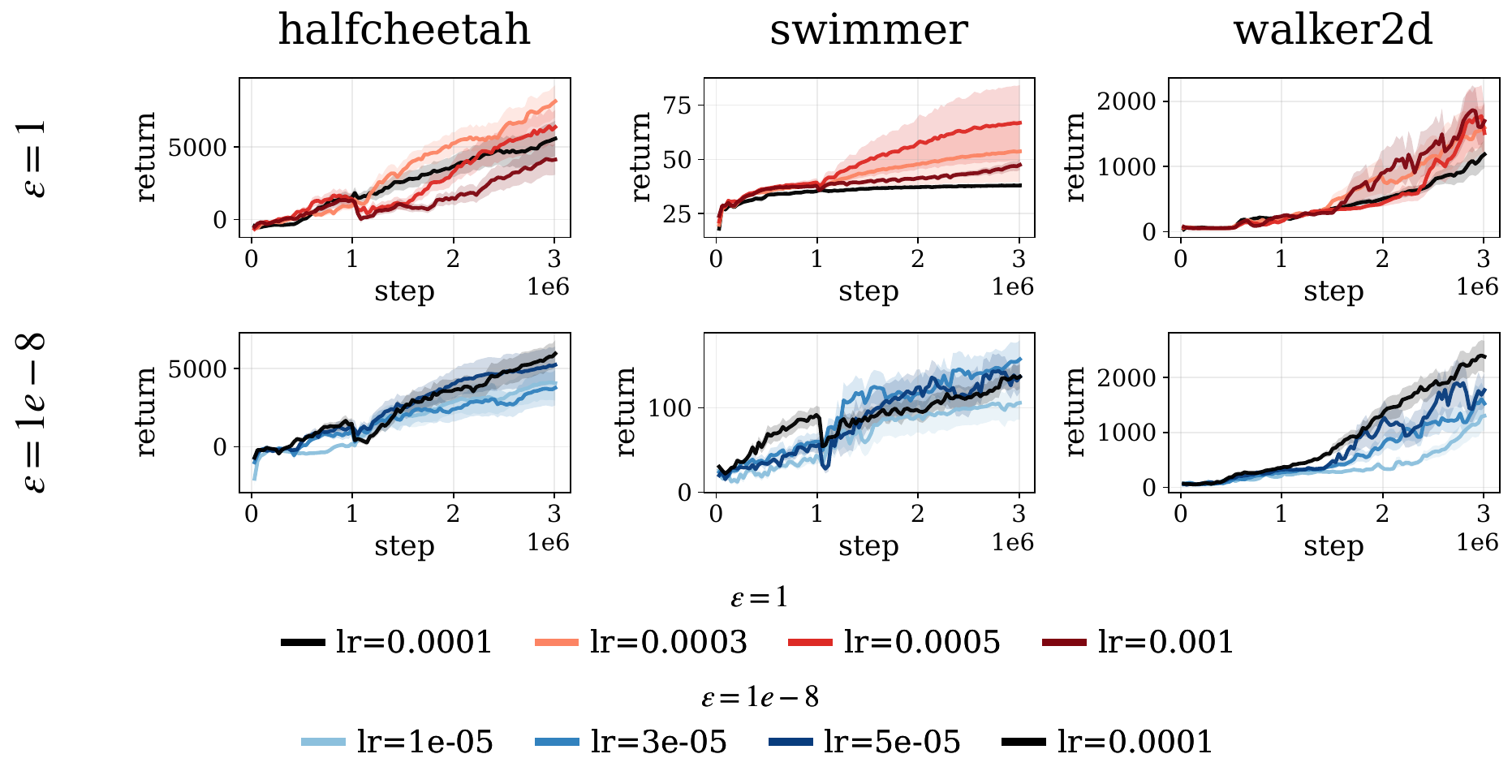}
    \caption{
        The average return of ReMAC with $M=4$ under different learning rates:
        $\epsilon=1$ (top row, $\alpha\in\{10^{-4},3\times10^{-4},5\times10^{-4},10^{-3}\}$) and
        $\epsilon=10^{-8}$ (bottom row, $\alpha\in\{10^{-5},3\times10^{-5},5\times10^{-5},10^{-4}\}$),
        for HalfCheetah, Swimmer, and Walker2d.
        The shared default learning rate $\alpha=10^{-4}$ is drawn in black in both rows.
    }
    \label{fig:lr_sweep_return}
\end{figure}

\begin{figure}[t]
    \centering
    \includegraphics[width=1.0\textwidth]{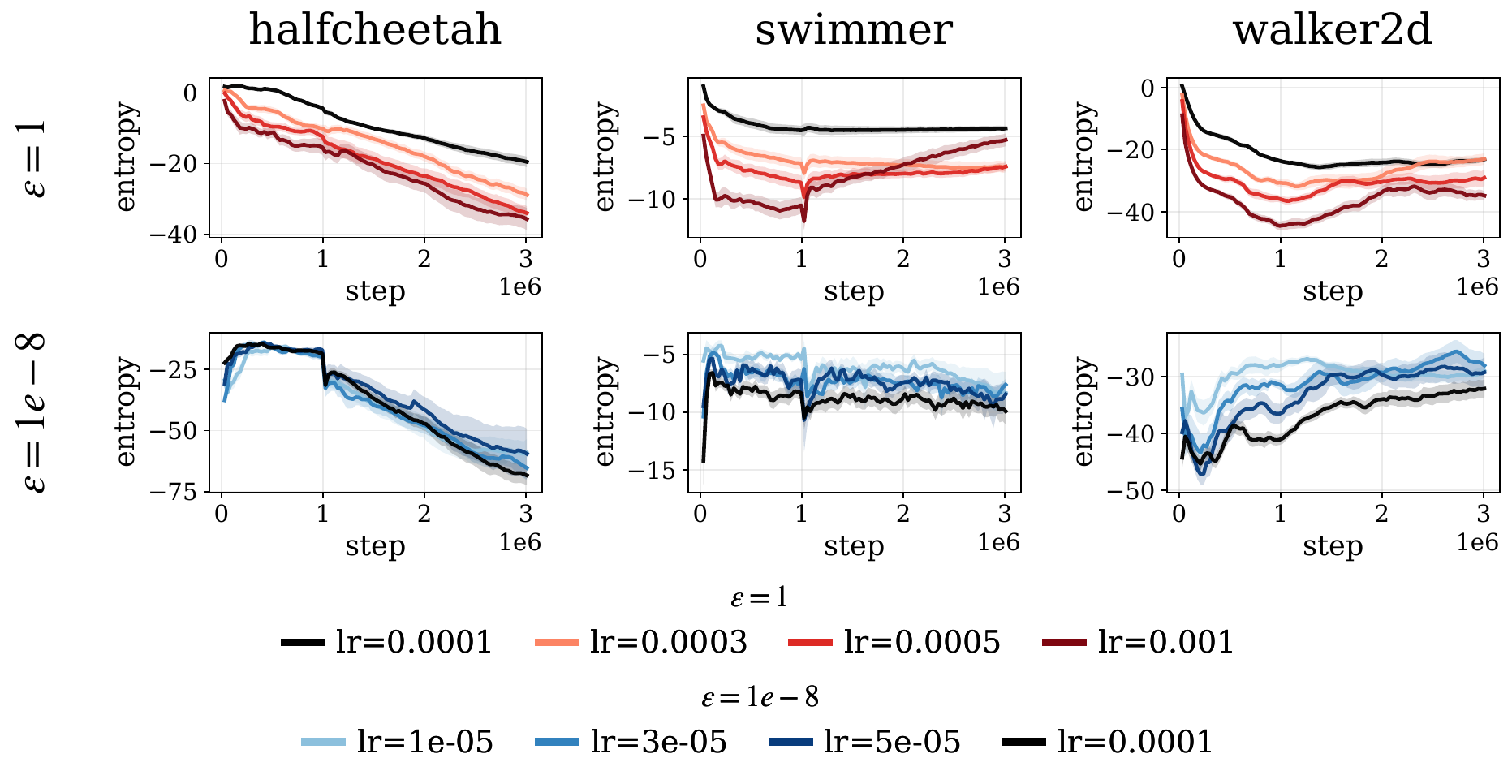}
    \caption{
        The average policy entropy of ReMAC with $M=4$ under different learning rates:
        $\epsilon=1$ (top row, $\alpha\in\{10^{-4},3\times10^{-4},5\times10^{-4},10^{-3}\}$) and
        $\epsilon=10^{-8}$ (bottom row, $\alpha\in\{10^{-5},3\times10^{-5},5\times10^{-5},10^{-4}\}$),
        for HalfCheetah, Swimmer, and Walker2d.
        The shared default learning rate $\alpha=10^{-4}$ is drawn in black in both rows.
    }
\label{fig:lr_sweep_entropy}

\end{figure}

\begin{figure}[t]
    \centering
    \begin{subfigure}[t]{0.49\textwidth}
        \centering
        \includegraphics[width=\textwidth]{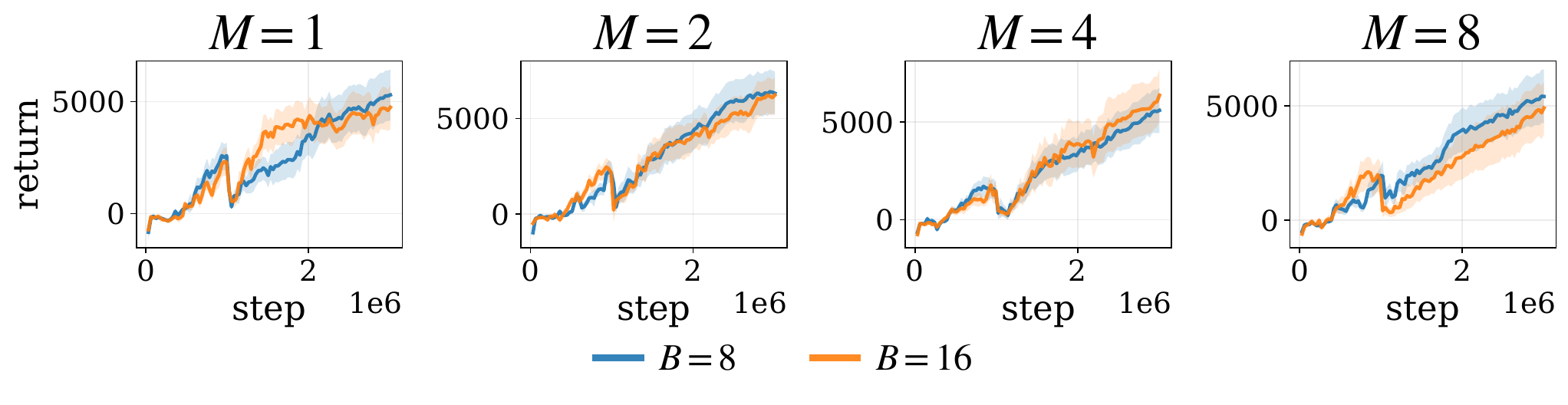}
        \caption{halfcheetah}
        \label{fig:return_bs_ablation_halfcheetah}
    \end{subfigure}
    \hfill
    \begin{subfigure}[t]{0.49\textwidth}
        \centering
        \includegraphics[width=\textwidth]{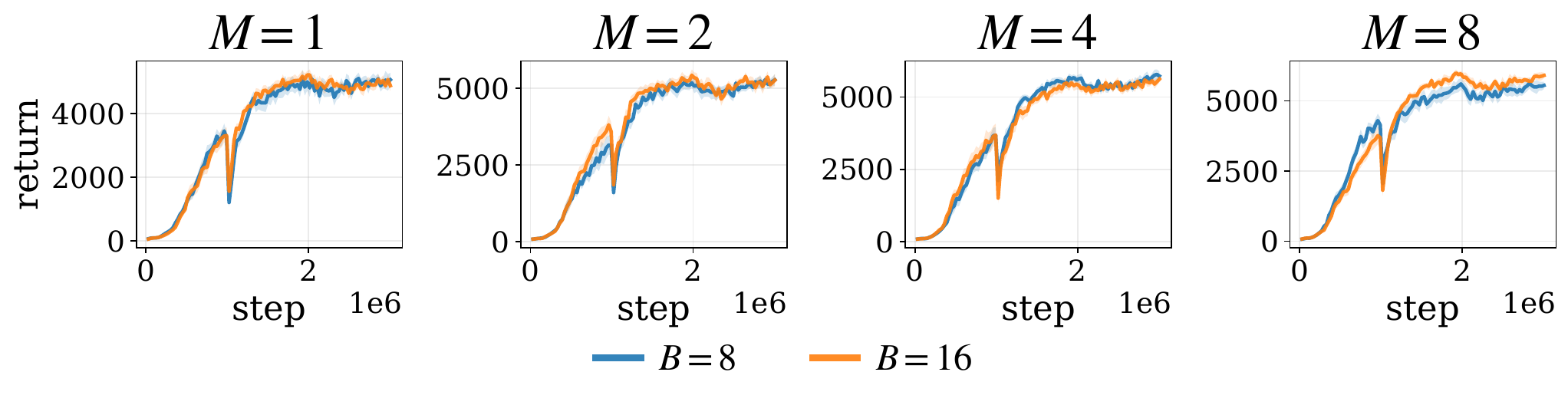}
        \caption{ant}
        \label{fig:return_bs_ablation_ant}
    \end{subfigure}

    \begin{subfigure}[t]{0.49\textwidth}
        \centering
        \includegraphics[width=\textwidth]{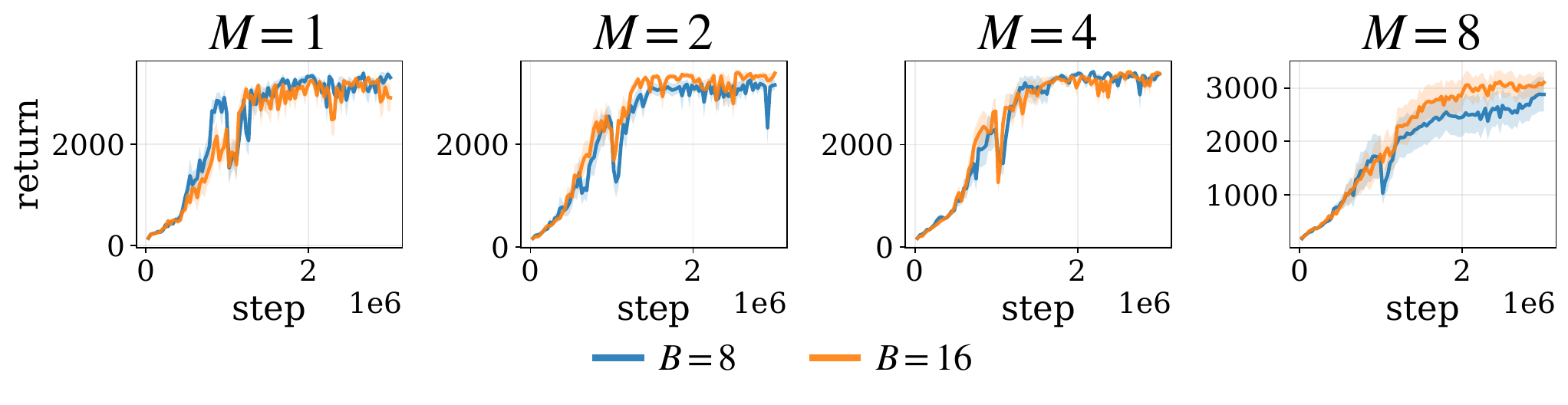}
        \caption{hopper}
        \label{fig:return_bs_ablation_hopper}
    \end{subfigure}
    \hfill
    \begin{subfigure}[t]{0.49\textwidth}
        \centering
        \includegraphics[width=\textwidth]{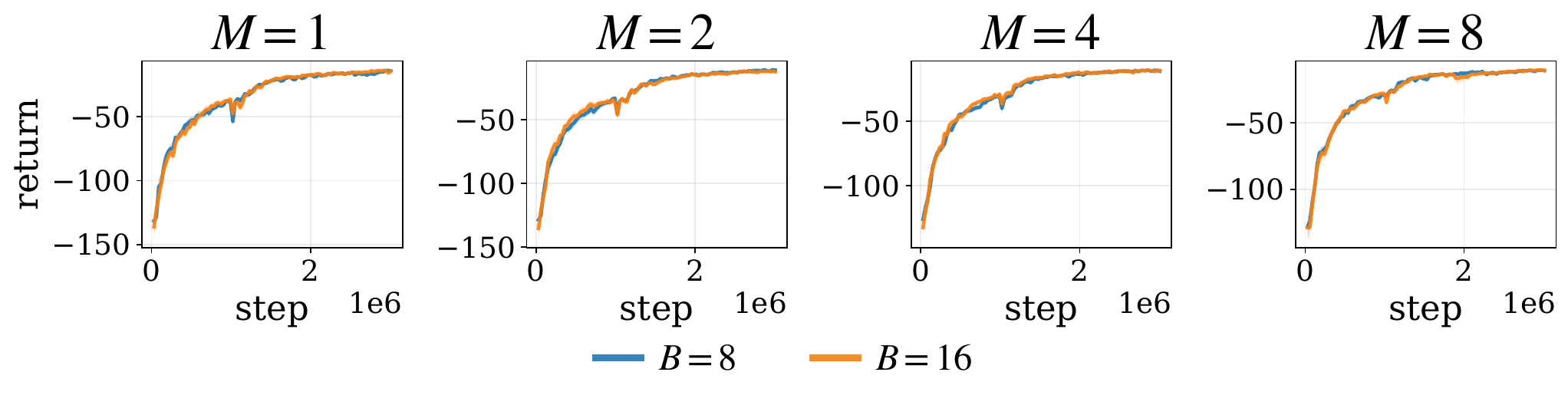}
        \caption{reacher}
        \label{fig:return_bs_ablation_reacher}
    \end{subfigure}

    \begin{subfigure}[t]{0.49\textwidth}
        \centering
        \includegraphics[width=\textwidth]{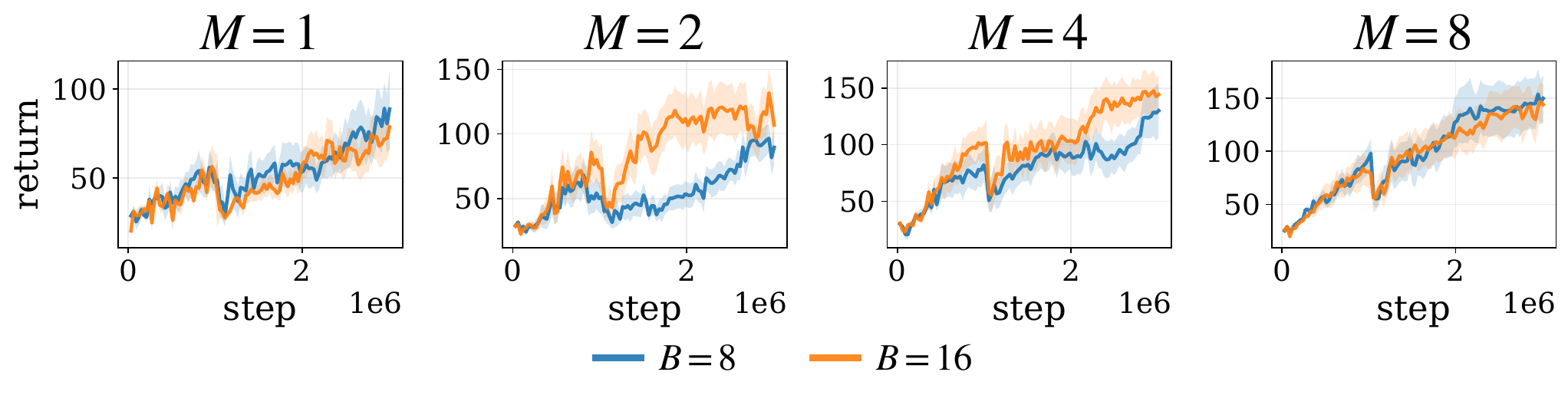}
        \caption{swimmer}
        \label{fig:return_bs_ablation_swimmer}
    \end{subfigure}
    \hfill
    \begin{subfigure}[t]{0.49\textwidth}
        \centering
        \includegraphics[width=\textwidth]{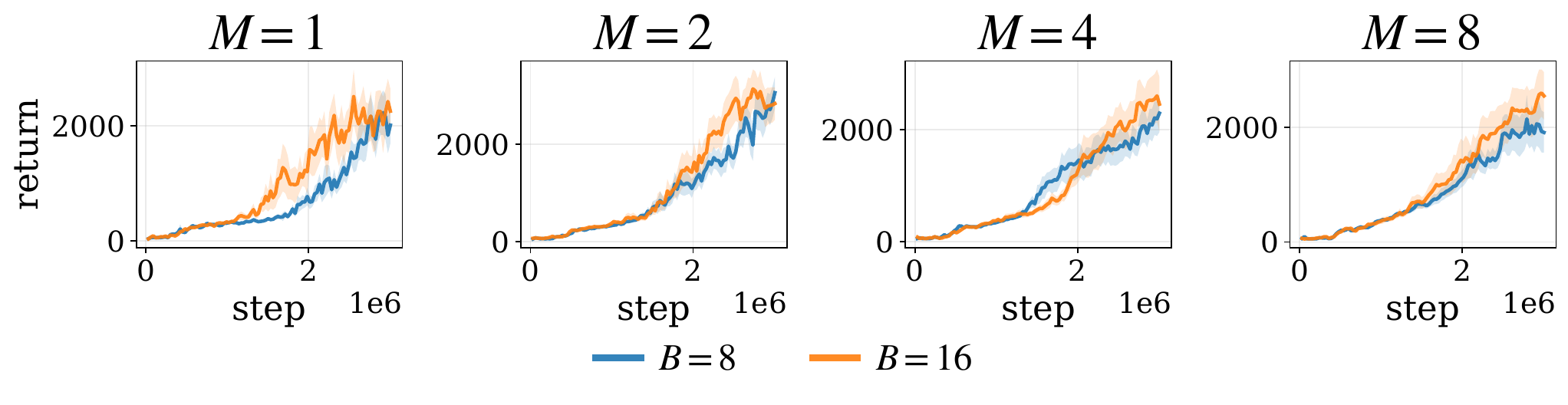}
        \caption{walker2d}
        \label{fig:return_bs_ablation_walker2d}
    \end{subfigure}
    \caption{
        The average return of ReMAC with $M=1, 2, 4, 8$ and $\epsilon=10^{-8}$ under different batch sizes $B\in\{8, 16\}$ for all tasks.
    }
    \label{fig:return_bs_ablation}
\end{figure}

\end{document}